\documentclass[11pt]{article}

\usepackage{amsmath,amssymb,amsthm}
\usepackage{mathtools}
\usepackage{graphicx}

\usepackage{apacite}
\usepackage{natbib}
\bibliographystyle{apacite}

\usepackage{hyperref}

\usepackage{tikz}
\usetikzlibrary{positioning}
\usetikzlibrary{calc}

\usepackage{authblk}
\usepackage{float}
\usepackage{fullpage}

\newcommand{\calN}{\mathcal{N}}
\newcommand{\calC}{\mathcal{C}}

\newcommand{\calD}{\mathcal{D}}

\newcommand{\E}{\operatorname{\mathbb{E}}}
\newcommand{\He}{\textrm{He}}

\newcommand{\bbR}{\mathbb{R}}

\renewcommand{\le}{\left}
\newcommand{\ri}{\right}

\newcommand{\eps}{\varepsilon}

\newcommand{\tia}[3]{#1_{#2;#3}}
\newcommand{\til}[3]{#1^{(#3)}_{#2}}
\newcommand{\tial}[4]{#1_{#2;#3}^{(#4)}}
\newcommand{\ti}[2]{#1_{#2}}
\newcommand{\TIL}[3]{#1^{#2}_{(#3)}}

\newcommand{\tl}[2]{#1^{(#2)}}

\newcommand{\nl}[1]{\n}
\newcommand{\n}{n}

\newcommand{\w}{W}
\newcommand{\wi}[1]{\ti{\w}{#1}}
\newcommand{\wil}[2]{\til{\w}{#1}{#2}}

\newcommand{\bias}{b}
\newcommand{\biasi}[1]{\ti{\bias}{#1}}
\newcommand{\biasil}[2]{\til{\bias}{#1}{#2}}

\newcommand{\z}{z}
\newcommand{\zl}[1]{\tl{z}{#1}}
\newcommand{\zial}[3]{\tial{z}{#1}{#2}{#3}}
\newcommand{\zil}[2]{\til{\z}{#1}{#2}}
\newcommand{\zia}[2]{\tia{z}{#1}{#2}}
\newcommand{\zi}[1]{\ti{\z}{#1}}

\newcommand{\sial}[3]{\tial{\sigma}{#1}{#2}{#3}}
\newcommand{\sil}[2]{\til{\sigma}{#1}{#2}}
\newcommand{\sia}[2]{\tia{\sigma}{#1}{#2}}

\newcommand{\gl}[1]{\tl{G}{#1}}
\newcommand{\gil}[2]{\til{G}{#1}{#2}}

\newcommand{\GIL}[2]{\TIL{G}{#1}{#2}}

\newcommand{\hg}{\widehat{G}}

\newcommand{\hgil}[2]{\til{\hg}{#1}{#2}}
\newcommand{\hgl}[1]{\tl{\hg}{#1}}

\newcommand{\HGIL}[2]{\TIL{\hg}{#1}{#2}}

\newcommand{\dg}{\widehat{\Delta G}}

\newcommand{\dgil}[2]{\til{\dg}{#1}{#2}}
\newcommand{\dgl}[1]{\tl{\dg}{#1}}

\newcommand{\ig}{K}

\newcommand{\igil}[2]{\til{\ig}{#1}{#2}}
\newcommand{\igl}[1]{\tl{\ig}{#1}}
\newcommand{\Igil}[2]{\ti{(\tl{\ig}{#2})^{-1}}{#1}}

\newcommand{\IGIL}[2]{\TIL{\ig}{#1}{#2}}

\newcommand{\cb}{C_b}
\newcommand{\cw}{C_W}
\newcommand{\cbl}[1]{\tl{\cb}{#1}}
\newcommand{\cwl}[1]{\tl{\cw}{#1}}

\DeclarePairedDelimiter\norm{\lVert}{\rVert}
\DeclarePairedDelimiter\brak{\langle}{\rangle}
\DeclarePairedDelimiter\braces{\{}{\}}

\newtheoremstyle{noparens}{}{}{\itshape}{}{\bfseries}{.}{ }{\thmname{#1}\thmnumber{ #2}\thmnote{ \textmd{#3}}}
\theoremstyle{noparens}
\newtheorem{theorem}{Theorem}[section]
\theoremstyle{plain}
\newtheorem{lemma}{Lemma}[section]
\newtheorem{corollary}{Corollary}[section]
\newtheorem{conjecture}{Conjecture}[section]

\newcommand{\parbreak}{\vspace{7px}\noindent}

\begin{document}
\title{Wide Neural Networks as a Baseline for the Computational No-Coincidence Conjecture}

\author{John Dunbar}
\author{Scott Aaronson\thanks{Supported by Open Philanthropy for Dr. Scott Aaronson's grant `Computational Complexity Theory with AI Alignment.' Correspondence: johnjdunbar@utexas.edu and aaronson@cs.utexas.edu. }}
\affil{UT Austin}
\date{}

\maketitle

\begin{abstract}
    We establish that randomly initialized neural networks, with large width and a natural choice of hyperparameters, have nearly independent outputs exactly when their activation function is nonlinear with zero mean under the Gaussian measure: $\mathbb{E}_{z \sim \mathcal{N}(0,1)}[\sigma(z)]=0$. For example, this includes ReLU and GeLU with an additive shift, as well as tanh, but not ReLU or GeLU by themselves. Because of their nearly independent outputs, we propose neural networks with zero-mean activation functions as a promising candidate for the Alignment Research Center's computational no-coincidence conjecture---a conjecture that aims to measure the limits of AI interpretability.
\end{abstract}
\section{Introduction}
    In recent years, our ability to interpret and understand the inner workings of AI models has progressed significantly.
    White-box methods leveraging our inner understanding have shown potential in applications from detecting backdoors \citep{anthropic25_bio,goldowskydill25_deception} and steering behaviors \citep{anthropic24_mono,turner24_steering} to unlearning knowledge \citep{cloud24_gradrouting,zou24_circuitbreakers}.
    However, model internals have resisted theoretical analysis because the training process is opaque and arbitrary AI models can be cryptographically obfuscated in the worst case \citep{goldwasser24_backdoors,chrisiano25_backdoors}. Without a robust understanding, it's difficult to attest to the reliability of our tools and applications. One effort to build our understanding is the computational no-coincidence conjecture (see \citet{neyman25_nocoincidence}), a concrete, theoretical conjecture that aims to measure the average case ability of white-box methods.

    The conjecture asserts that for random circuits $C$ and an extremely rare property $P(C)$, there should exist short explanations that prove when the property is true with only a small false positive rate.
    Specifically, the conjecture uses random reversible circuits $C: \braces{0,1}^{3n} \to \braces{0,1}^{3n}$ and the rare property $P(C)$ that inputs ending in $n$ zeros never correspond to outputs ending in $n$ zeros. We know this property to be rare by a result of \citet{gay25_rrc} that proves that outputs of a random reversible circuit are nearly independent. The conjecture then asserts that there should exist a polynomial-time verifier $V(C, \pi)$, receiving a circuit $C$ and explanation $\pi$ as input, such that an acceptable explanation always exists when $P(C)$ is true, and no explanation exists on 99\% of random circuits. 
    Since $P(C)$ is satisfied by much less than 1\% of random reversible circuits, the explanations have some affordance for false positives.
    If the computational no-coincidence conjecture is true and an accurate verifier exists, it would be evidence that rare behaviors induce detectable structure that makes interpretability feasible. If the conjecture is false, however, it would be evidence that inscrutable circuits can be common, and we should expect them to exist inside our AI models.

    Random reversible circuits are a nice toy model, but our end goal is to understand the no-coincidence conjecture for neural networks. In this work, we make a first step towards that goal by establishing conditions when randomly initialized neural networks behave like random functions and have nearly independent outputs. 
    We propose these random-behaving neural networks as an alternative construction for circuits $C$, and propose the condition that a set of inputs never correspond to an all-negative output as the rare property $P(C)$. 

    Our results use the first order perturbative expansion and approximation from \citet{roberts22_pdlt} which ignores small-width, $O(1/\n^2)$ effects (where $\n$ is the network width). Our main result, \autoref{thm:mix}, establishes that with appropriate hyperparameters\footnotemark{} and as depth and width scale at a suitable rate, the probability distribution of neural network outputs over the randomness in its weights approaches a standard Gaussian distribution (and thus approaches zero correlation between outputs) if and only if its activation function $\sigma$ is nonlinear and has zero mean under the Gaussian measure: 
    \footnotetext{Our hyperparameters are a generalization of Kaiming initialization \citep{he15}. They are described further in \autoref{sec:cov}.}
\begin{equation}
    \E_{z \sim \calN(0,1)} [\sigma(z)] = 0.
\end{equation}
    The zero-mean criterion is satisfied for tanh but not ReLU or GeLU functions, agreeing with prior research that suggests in different ways that tanh networks are biased towards complex functions \citep{poole16}, and ReLU networks are biased towards simple functions \citep{hanin19,depalma19,teney24}. But the criterion can also be satisfied by any activation function with an additive shift, such as $\sigma(z)=\text{ReLU}(z) - 1/\sqrt{2 \pi}$.

\section{Problem Setup}\label{sec:arch}

    Our neural networks are described by the following standard recurrence:
\begin{equation}\label{eq:nn_recurrence}
\begin{aligned}
    \zil{i}{1} &= \biasil{i}{1} + \sum_{j=1}^{\nl{0}} \wil{i j}{1} \ti{x}{j}, \\
    \sil{i}{\ell} &= \sigma(\zil{i}{\ell}), \\
    \zil{i}{\ell} &= \biasil{i}{\ell} + \sum_{j=1}^{\nl{\ell-1}} \wil{i j}{\ell} \sil{j}{\ell-1} \text{ for } \ell > 1.
\end{aligned}
\end{equation}
    Here, $\sigma : \bbR \to \bbR$ is our activation function, $\n$ is our network width, and at every layer $\ell$, our networks are parameterized by the weight matrix $\braces{\wil{i j}{\ell}}_{i,j=1,\ldots,\n}$ and bias vector $\braces{\biasil{i}{\ell}}_{i=1,\ldots,\nl{\ell}}$. Instead of a dedicated output layer, we consider the preactivations at an arbitrary layer as the final outputs of our neural networks, and for outputs smaller than the network width, we can freely ignore the extra neurons.
    Our weights and biases are drawn from a Gaussian distribution with a mean of zero and variance set by constants $\cbl{\ell}$ and $\cwl{\ell}$:
\begin{equation}
\begin{aligned}
    \biasil{i}{\ell} &\sim \calN(0, \cbl{\ell}), \\
    \wil{i j}{\ell} &\sim \calN(0, \cwl{\ell} / \nl{\ell-1}).
\end{aligned}
\end{equation}
    A single input to our neural network is a vector $x \in \bbR^{\nl{0}}$, but we'll often need to consider a dataset with many inputs. We'll denote the dataset as $\calD = \braces{\tia{x}{i}{\alpha}}_{i=1,\ldots,\nl{0};\alpha=1,\ldots}$, where we have used the notation $\tia{x}{i}{\alpha} \in \bbR$ to denote index $i$ of the input vector $\alpha$. For preactivations, the notation $\zial{i}{\alpha}{\ell} \in \bbR$ represents the value before the activation function at neuron index $i$ for input $\alpha$ at layer $\ell$.

    All expectations will be taken over the random variables $\wil{i j}{\ell}$ and $\biasil{i}{\ell}$, so to denote general Gaussian expectations with covariance matrix $K \in \bbR^{m \times m}$, we will use bracket notation:
\begin{equation}\label{eq:braknotation}
    \brak{f(\zi{1}, \ldots, \zi{m})}_K = \int \frac{1}{\sqrt{|2 \pi K|}} \exp\le(-\frac{1}{2} \sum_{i,j=1}^m \ti{(K^{-1})}{i j} \, \zi{i} \zi{j}\ri) f(\zi{1}, \ldots, \zi{m}) \prod_{i=1}^m d \zi{i}.
\end{equation}
    We will drop the subscript for expectations over a standard Gaussian measure $\brak{f(z)} = \brak{f(z)}_I$.
    We will also later need the assumption that $\sigma$ is square integrable under the Gaussian measure:
\begin{equation}
    \brak{\sigma(z)^2} < \infty.
\end{equation}

\section{Near Gaussianity}

    At the infinite width-limit, it's well known that randomly initialized neural networks become Gaussian processes, and the network's behavior can be completely described by a covariance matrix $\igl{\ell}$ \citep{lee18}. 
    In this setting, demonstrating that a neural network's outputs are independent merely requires proving that the covariance between any two outputs decays to zero. But neural networks are never infinitely wide in practice. It turns out that at finite width, neural networks are nearly Gaussian, and their non-Gaussian components decay with width \citep{roberts22_pdlt}.
\begin{theorem}[\citep{roberts22_pdlt}]\label{thm:roberts}
    For a neural network as defined in \autoref{sec:arch} with random weights and a fixed set of inputs $\calD = \braces{\tia{x}{i}{\alpha}}$ (where $\tia{x}{i}{\alpha} \in \bbR$ is index $i$ of datapoint $\alpha$) and in the nondegenerate case when the covariance matrix is invertible, the preactivations at every layer are distributed as nearly Gaussian.
\begin{equation}\label{eq:pzlmd}
\begin{aligned}
    P(\zl{\ell} \mid \calD) \propto \exp\bigg(&-\frac{1}{2} \sum_{\alpha_1, \alpha_2 \in \calD} \ti{\left(\igl{\ell} + \frac{1}{n} \tl{H}{\ell}\right)^{-1}}{\alpha_1 \alpha_2} \, \sum_{i= 1}^{\nl{\ell}} \zial{i}{\alpha_1}{\ell} \zial{i}{\alpha_2}{\ell} \\
    &+\frac{1}{8n} \sum_{\alpha_1,\ldots,\alpha_4 \in \calD} \til{J}{\alpha_1 \alpha_2 \alpha_3 \alpha_4}{\ell} \sum_{i_1, i_2=1}^\n \zial{i_1}{\alpha_1}{\ell} \zial{i_1}{\alpha_2}{\ell} \zial{i_2}{\alpha_3}{\ell} \zial{i_2}{\alpha_4}{\ell}
    + O\le(\frac{1}{\n^2}\ri) \bigg).
\end{aligned}
\end{equation}
    Here, the terms $\igil{\alpha_1 \alpha_2}{\ell}$, $\til{H}{\alpha_1\alpha_2}{\ell}$, and $\til{J}{\alpha_1\alpha_2\alpha_3\alpha_4}{\ell}$ are tensors that depend on the inputs $\calD$, layer $\ell$, and activation function $\sigma$, and the $O(1/\n^2)$ term contains additional tensors and instances of $z$. The covariances $\igil{\alpha_1\alpha_2}{\ell}$ of this nearly Gaussian distribution are determined by the following equations:
\begin{equation}
    \E[\zial{i_1}{\alpha_1}{\ell} \zial{i_2}{\alpha_2}{\ell}] = \delta_{i_1 i_2} \igil{\alpha_1 \alpha_2}{\ell} + O\le(\frac{1}{n}\ri),
\end{equation} 
\begin{equation}
    \igil{\alpha_1 \alpha_2}{1} = \cbl{1} + \frac{\cwl{1}}{\nl{0}} \sum_{j=1}^{\nl{0}} \tia{x}{j}{\alpha_1} \tia{x}{j}{\alpha_2},
\end{equation}
\begin{equation}
\begin{aligned}\label{eq:gildef}
    \igil{\alpha_1 \alpha_2}{\ell} = \cbl{\ell} + \cwl{\ell} \brak{\sigma(\zi{\alpha_1}) \sigma(\zi{\alpha_2})}_{\igl{\ell-1}} \textnormal{ for } \ell > 1.
\end{aligned}
\end{equation}
\end{theorem}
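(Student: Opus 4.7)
The plan is to proceed by induction on the layer index $\ell$, exploiting the fact that conditional on the previous layer's activations the current layer's preactivations are exactly Gaussian, and then performing a systematic $1/\n$ expansion of the resulting marginal distribution.

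For the base case $\ell = 1$, the preactivations $\zial{i}{\alpha}{1}$ are linear combinations of the independent Gaussians $\biasil{i}{1}$ and $\wil{ij}{1}$, so the joint distribution is exactly multivariate Gaussian with covariance $\delta_{i_1 i_2} \igil{\alpha_1\alpha_2}{1}$, matching \autoref{eq:pzlmd} with $\tl{H}{1} = 0$ and $\tl{J}{1} = 0$.

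For the inductive step, suppose $P(\zl{\ell-1} \mid \calD)$ has the form of \autoref{eq:pzlmd}. Conditional on $\zl{\ell-1}$, the layer-$\ell$ preactivations are again exactly Gaussian, with empirical covariance
\begin{equation*}
    \hgil{\alpha_1\alpha_2}{\ell} \;=\; \cbl{\ell} + \frac{\cwl{\ell}}{\nl{\ell-1}} \sum_{j=1}^{\nl{\ell-1}} \sigma(\zial{j}{\alpha_1}{\ell-1})\, \sigma(\zial{j}{\alpha_2}{\ell-1}).
\end{equation*}
Write $\hgil{\alpha_1\alpha_2}{\ell} = \igil{\alpha_1\alpha_2}{\ell} + \dgil{\alpha_1\alpha_2}{\ell}$, with $\igil{\alpha_1\alpha_2}{\ell} = \E[\hgil{\alpha_1\alpha_2}{\ell}] = \cbl{\ell} + \cwl{\ell}\brak{\sigma(\zi{\alpha_1})\sigma(\zi{\alpha_2})}_{\igl{\ell-1}}$ by the inductive hypothesis. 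The fluctuation $\dgl{\ell}$ is an empirical average over $\n$ nearly-independent summands and so has variance of order $1/\n$. I would then obtain $P(\zl{\ell} \mid \calD)$ by marginalizing $P(\zl{\ell} \mid \zl{\ell-1}, \calD)\, P(\zl{\ell-1} \mid \calD)$ over $\zl{\ell-1}$.

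The central calculation is a Taylor expansion of the conditional log-density of $\zl{\ell}$ around $\dgl{\ell} = 0$: the inverse $(\hgl{\ell})^{-1}$ expands as $(\igl{\ell})^{-1} - (\igl{\ell})^{-1}\dgl{\ell}(\igl{\ell})^{-1} + \cdots$, and the normalizing log-determinant expands similarly. After averaging over $\zl{\ell-1}$ using the inductive hypothesis, everything reduces to cumulants of the random variables $\sigma(\zi{\alpha_1})\sigma(\zi{\alpha_2})$ under the layer-$(\ell-1)$ distribution. At order $1/\n$, the surviving contributions are all controlled by the connected four-point function $\Cov\!\big(\sigma(\zi{\alpha_1})\sigma(\zi{\alpha_2}),\, \sigma(\zi{\alpha_3})\sigma(\zi{\alpha_4})\big)_{\igl{\ell-1}}$; reading off its contribution to the quadratic piece in $\zl{\ell}$ (together with the normalization) identifies $\tl{H}{\ell}$, while its contribution to the induced quartic piece identifies $\tl{J}{\ell}$.

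The main obstacle is the bookkeeping. One must verify that the inductive non-Gaussian corrections $\tl{H}{\ell-1}/\n$ and $\tl{J}{\ell-1}/\n$ feed into the layer-$\ell$ cumulants only at order $1/\n$ or smaller, so that the recursion closes without producing new leading terms; that all odd-order monomials in $\zl{\ell}$ cancel after marginalization, which follows from the sign-flip symmetry $\wil{ij}{\ell} \to -\wil{ij}{\ell}$; and that the combinatorics of Wick contractions against the higher cumulants of $\sigma(\zi{\alpha_1})\sigma(\zi{\alpha_2})$ produce only $O(1/\n^2)$ remainders. Assembling these into clean recursions for $\tl{H}{\ell}$ and $\tl{J}{\ell}$ in terms of $\tl{H}{\ell-1}$, $\tl{J}{\ell-1}$, and cumulants of $\sigma$ under $\igl{\ell-1}$ is the technical heart of the argument.
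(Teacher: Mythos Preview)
Your proposal is correct and follows essentially the same approach as the paper's proof in \autoref{sec:robertsproof}: induction on depth, exact conditional Gaussianity given the previous layer, the mean-plus-fluctuation decomposition $\hgl{\ell} = \gl{\ell} + \dgl{\ell}$, and a perturbative expansion of the inverse covariance and normalizer before marginalizing. The paper itself only carries out the zeroth-order version (absorbing all $O(1/\n)$ terms and deferring the actual identification of $\tl{H}{\ell}$ and $\tl{J}{\ell}$ to \citet{roberts22_pdlt}), so your outline of the first-order bookkeeping in fact goes slightly beyond what the appendix proves; one small sloppiness is that $\E[\hgil{\alpha_1\alpha_2}{\ell}]$ equals $\igil{\alpha_1\alpha_2}{\ell}$ only up to an $O(1/\n)$ correction (the paper separates this as $\gl{\ell}$ versus $\igl{\ell}$), and that correction is one of the contributions to $\tl{H}{\ell}$ you will need to track.
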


\parbreak
    For completeness, we prove a simplified form of \autoref{thm:roberts} in \autoref{sec:robertsproof}. For the full proof, see \citet[Chapter~4]{roberts22_pdlt}.
    From this theorem, we see that the correlations between outputs is primarily governed by the width-independent covariance matrix $\igil{\alpha_1\alpha_2}{\ell}$, but this matrix depends nontrivially on the hyperparameters and activation function. Our main result proves when this matrix approaches the identity, and thus, when different outputs of the neural network approach independence.

\section{Decaying Covariances}\label{sec:cov}

    Since the Gaussian component of \eqref{eq:pzlmd} does not decay with width, the independence of our neural network's outputs depends on the Gaussian covariances decaying to zero. We now fix hyperparameters to analyze when this decay occurs.
    Consider the special case where input vectors are normalized to $\sqrt{n}$ and the hyperparameters $\cb$ and $\cw$ are set as follows:
\begin{equation}\label{eq:critical}
\begin{aligned}
    \frac{1}{\nl{0}} \sum_{i=1}^{\nl{0}} (\tia{x}{i}{\alpha})^2 &= 1, \\
    \cwl{1} &= 1, \\
    \cwl{\ell} &= \frac{1}{\brak{\sigma(z)^2}} \textnormal{ for } \ell > 1, \\
    \cbl{\ell} &= 0.
\end{aligned}
\end{equation} 
    We'll refer to these hyperparameters in combination with the input normalization as a critical tuning because they prevent the diagonal terms $\igil{\alpha \alpha}{\ell}$, the variance of a single input's preactivations, from growing or decaying at every layer.
\begin{equation}
\begin{aligned}
    \igil{\alpha \alpha}{1} &= \frac{1}{n} \sum_{i=1}^n (\tia{x}{i}{\alpha})^2 = 1, \\
    \igil{\alpha \alpha}{\ell} &= \frac{\brak{\sigma(\z)^2}_{\igil{\alpha \alpha}{\ell-1}}}{\brak{\sigma(\z)^2}} = 1 \text{ for } \ell > 1.
\end{aligned}
\end{equation}
    These settings are also not uncommon in practice. The weight variance $\cw$ is a generalization of Kaiming initialization \citep{he15} which was also designed with the intent of keeping variances stable.

    Since the covariances $\igil{\alpha_1 \alpha_2}{\ell}$ are a function of two inputs $\alpha_1, \alpha_2$ and have no dependence on the other inputs, it's sufficient to consider datasets with only two inputs. There, $\igl{\ell}$ is a $2 \times 2$ matrix and it becomes clear that, because the diagonal terms are constant ($\igil{\alpha \alpha}{\ell}=1$), the off-diagonal terms at any layer are a function of their counterparts at the previous layer, $\igil{\alpha_1\alpha_2}{\ell}=\calC(\igil{\alpha_1\alpha_2}{\ell-1})$ for some $\calC:\bbR\to\bbR$. 
    We will see that as we progress through the layers, the off-diagonal term approaches a fixed point based on the activation function $\sigma$.

\begin{theorem}\label{thm:mix}
    In a neural network as defined in \autoref{sec:arch} with random weights, critical hyperparameters from \eqref{eq:critical}, nonlinear activation function $\sigma$, and a set of fixed inputs $\calD$ containing no scalar multiples (i.e. $|\igil{\alpha_1 \alpha_2}{1}| \neq 1$ for $\alpha_1 \neq \alpha_2$), preactivations are distributed as follows with exponentially decaying covariance if and only the activation function has zero mean under the Gaussian measure, $\brak{\sigma(z)} = 0$.
\begin{equation}
    \begin{gathered}
    P(\zl{\ell} \mid \calD) \propto \exp\le(-\frac{1}{2} \sum_{\alpha_1, \alpha_2 \in \calD} \Igil{\alpha_1 \alpha_2}{\ell} \sum_{i = 1}^n \zial{i}{\alpha_1}{\ell} \zial{i}{\alpha_2}{\ell} + O\le(\frac{1}{n}\ri)\ri), \\
    \igil{\alpha \alpha}{\ell} = 1, \qquad \igil{\alpha_1 \alpha_2}{\ell} = \exp(-\Omega(\ell)) \textnormal{ for } \alpha_1 \neq \alpha_2.
    \end{gathered}
\end{equation}
    Conversely, when the activation function has a nonzero mean, the covariance does not decay to zero, $\lim_{\ell \to \infty} \igil{\alpha_1 \alpha_2}{\ell} > 0$.
\end{theorem}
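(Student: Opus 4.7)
The plan is to reduce the problem to a scalar dynamical system and analyze its fixed-point structure via a Hermite expansion of $\sigma$. Under critical tuning, $\igil{\alpha\alpha}{\ell} \equiv 1$ at every layer, and the off-diagonal recursion~\eqref{eq:gildef} for two inputs depends only on $\igil{\alpha_1 \alpha_2}{\ell-1}$. Setting $r_\ell := \igil{\alpha_1 \alpha_2}{\ell}$, the iteration collapses to $r_{\ell+1} = \calC(r_\ell)$ with
\[
\calC(r) := \frac{\brak{\sigma(\zi{1})\sigma(\zi{2})}_{K_r}}{\brak{\sigma(z)^2}}, \qquad K_r := \begin{pmatrix} 1 & r \\ r & 1 \end{pmatrix},
\]
and $r_1 \in (-1,1)$ by the non-scalar-multiple hypothesis. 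So the theorem reduces to characterizing when $r_\ell \to 0$ exponentially.

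The key tool is the Hermite expansion $\sigma = \sum_{k \geq 0} a_k \He_k$ in probabilists' normalization. The orthogonality relation $\brak{\He_j(\zi{1})\He_k(\zi{2})}_{K_r} = k!\, r^k\, \delta_{jk}$ gives the absolutely convergent power series
\[
\calC(r) = \sum_{k \geq 0} b_k r^k, \qquad b_k := \frac{a_k^2\, k!}{\brak{\sigma(z)^2}} \geq 0, \qquad \sum_{k \geq 0} b_k = \calC(1) = 1.
\]
Two observations then drive everything: $b_0 = \brak{\sigma(z)}^2/\brak{\sigma(z)^2}$, so $b_0 = 0$ iff $\brak{\sigma(z)} = 0$; and $\sigma$ nonlinear is equivalent to $b_k > 0$ for some $k \geq 2$, which in turn forces $b_1 < 1$.

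For the forward direction, assume $\brak{\sigma(z)} = 0$, so $\calC(0) = 0$ and $|\calC(r)| \leq \sum_{k \geq 1} b_k |r|^k < |r|$ for $r \in (-1,1)\setminus\{0\}$ (strict by nonlinearity). Hence $|r_\ell|$ is strictly decreasing, and a continuity argument forces its limit to be a fixed point of $\calC|_{[0,1]}$, which must be $0$ since $|r_1|<1$. The exponential rate follows from
\[
|r_{\ell+1}|/|r_\ell| \leq b_1 + (1-b_1)|r_\ell| \leq b_1 + (1-b_1)|r_1| < 1,
\]
giving $|r_\ell| = \exp(-\Omega(\ell))$. Substituting into \autoref{thm:roberts} produces the claimed form of $P(\zl{\ell} \mid \calD)$.

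For the converse, $\brak{\sigma(z)} \neq 0$ yields $\calC(0) = b_0 > 0$. On $[0,1]$, $\calC$ is non-decreasing with $\calC(0) = b_0$, so any iterate landing in $[0,1]$ is trapped in $[b_0,1]$ and monotone iteration on a compact interval converges to a fixed point $\geq b_0$. The part I expect to be most delicate is showing the sequence actually reaches $[0,1]$ from $r_1 < 0$: splitting $\calC$ into even and odd powers of $r$ yields the lower bound $\calC(r) \geq b_0 + (1-b_0)r$ for $r \in (-1,0)$ (because for $r<0$ and $j$ odd, $r^j \geq r$, and the even part is nonnegative). Comparing the iteration to the affine recurrence $s_{\ell+1} = b_0 + (1-b_0)s_\ell$, whose unique fixed point is $1$ and is attracting since $|1-b_0|<1$, shows that if $r_\ell < 0$ for every $\ell$ then $r_\ell \geq s_\ell \to 1$, a contradiction. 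Hence some $r_\ell$ becomes nonnegative in finitely many steps, and $\lim r_\ell \geq b_0 > 0$.
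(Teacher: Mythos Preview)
Your argument is correct and shares the paper's key reduction---the Hermite expansion that turns $\calC$ into the power series $\calC(r)=\sum_k b_k r^k$ with nonnegative coefficients summing to $1$---but the subsequent dynamical analysis is genuinely different. The paper argues geometrically: it proves $\calC$ is (strictly) convex on $[0,1]$, uses this together with $\calC(1)=1$ to locate a unique attracting fixed point $k^*\in[0,1]$, and invokes the Banach fixed-point theorem for convergence; the case split is on whether $\calC'(1)\le 1$ or $>1$. You instead work directly with the coefficients: for the zero-mean direction you extract the explicit contraction bound $|\calC(r)|/|r|\le b_1+(1-b_1)|r|\le b_1+(1-b_1)|r_1|<1$, which is more elementary and even gives a quantitative rate in terms of $b_1$ and $|r_1|$; for the nonzero-mean direction you bound $\calC$ below on $(-1,0)$ by the affine map $r\mapsto b_0+(1-b_0)r$ and use a comparison argument to force the orbit into $[0,1]$, where monotonicity of $\calC$ finishes. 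Each approach has its appeal: the paper's convexity picture cleanly explains \emph{why} there is a single attracting fixed point and where it sits, while your estimates avoid convexity entirely and are closer to a one-line computation.

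One small wording issue: your claim ``$\sigma$ nonlinear is equivalent to $b_k>0$ for some $k\ge 2$'' is false as stated (take $\sigma(z)=z+1$: nonlinear but $b_k=0$ for $k\ge 2$); the correct equivalence is with \emph{nonaffine}. This does not damage your proof, because in the forward direction you have already assumed $b_0=0$, and nonlinear together with $b_0=0$ does force some $b_k>0$ with $k\ge2$ (hence $b_1<1$), while your converse argument never uses this claim. The paper, for comparison, explicitly separates the affine--nonlinear case $\sigma(z)=az+b$ and handles it by direct computation.
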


\parbreak
    Although the width-dependent $O(1/\n)$ terms may be a function of $\ell$, which we have treated as a constant so far, there exists a suitable way\footnotemark{} to scale both $n$ and $\ell$ simultaneously such that the $O(1/n)$ term becomes less than exponentially small in $\ell$.
    \footnotetext{Under the assumption that the $O(1/n^2)$ term in \eqref{eq:pzlmd} behaves reasonably and remains small, the necessary scaling is exponential for most activation functions: $n = \exp(\Theta(\ell))$.}
\begin{corollary}
    For a neural network defined the same as in Theorem 4.1 with depth $\ell$ and sufficiently large width $n$, the probability distribution of outputs $P(\zl{\ell} \mid \calD)$ is within $\exp(-\Omega(\ell))$ total variation distance from a standard Gaussian distribution.
\end{corollary}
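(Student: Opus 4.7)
The plan is to apply the triangle inequality for total variation distance via an intermediate Gaussian, and then bound each piece separately. Let $Q$ denote the mean-zero Gaussian density on $\bbR^{n|\calD|}$ obtained from the exponent of \autoref{thm:roberts} by discarding the $O(1/n)$ and higher-order corrections; explicitly, $Q$ has covariance $I_n \otimes \igl{\ell}$ (block-diagonal across neurons, with block $\igl{\ell}$ across dataset indices). By the triangle inequality, $d_{TV}(P(\zl{\ell}\mid\calD), \calN(0,I)) \leq d_{TV}(P, Q) + d_{TV}(Q, \calN(0,I))$, which I would bound term by term.

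For the Gaussian-to-Gaussian term, I would apply Pinsker's inequality with the closed-form KL divergence between centered Gaussians. Since $I_n \otimes \igl{\ell}$ shares its diagonal with $I$, the trace term cancels and $\operatorname{KL}(Q \,\|\, \calN(0,I)) = -\tfrac{n}{2}\ln\det(\igl{\ell})$. Writing $\igl{\ell} = I + E_\ell$, where by \autoref{thm:mix} the matrix $E_\ell$ has zero diagonal and off-diagonals of magnitude $\exp(-\Omega(\ell))$, the expansion $\ln\det(I + E_\ell) = -\tfrac{1}{2}\operatorname{tr}(E_\ell^2) + O(\|E_\ell\|^3)$ yields $\operatorname{KL} \lesssim n\,|\calD|^2 \exp(-2\Omega(\ell))$ and hence $d_{TV}(Q, \calN(0,I)) \lesssim \sqrt{n}\,\exp(-\Omega(\ell))$.

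For the non-Gaussian term, I would write $P \propto Q\cdot\exp(R)$, where $R$ collects the $O(1/n)$ quadratic correction from expanding $(\igl{\ell} + \tl{H}{\ell}/n)^{-1}$, the quartic $\tl{J}{\ell}$ term, and the $O(1/n^2)$ remainder of \eqref{eq:pzlmd}. Using the elementary bound $d_{TV}(P, Q) \leq \tfrac{1}{2}\E_Q[|e^R - 1|] + \tfrac{1}{2}|\E_Q[e^R] - 1|$, it suffices to control the $Q$-moments of $R$. Each monomial in $R$ carries an explicit $1/n$ prefactor compensated only by sums over at most two neuron indices; Wick contractions under the Gaussian $Q$ reduce the naive $n^2$ terms in the quartic piece to $O(n)$ leading-order diagonal contractions, so $\E_Q[R^2] \lesssim \operatorname{poly}(|\calD|)/n$, with higher cumulants similarly suppressed. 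This yields $d_{TV}(P, Q) \lesssim \operatorname{poly}(|\calD|,\ell)/n$; choosing $n = \exp(\Theta(\ell))$ with exponent small enough to absorb both the $\sqrt{n}$ factor from the first bound and the polynomial prefactor from the second then delivers the desired $\exp(-\Omega(\ell))$ bound.

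The main obstacle is this last step. Without care, the double sum $\sum_{i_1, i_2}$ in the quartic term has $n^2$ entries and could cancel the explicit $1/n$ prefactor, breaking the bound. The key is that the leading Wick contractions pair $z$'s into $\delta_{i_1 i_2}$-diagonals, giving an $O(n)$ rather than $O(n^2)$ contribution; one must also verify that the tensor norms of $\tl{H}{\ell}$ and $\tl{J}{\ell}$ and the $O(1/n^2)$ remainder grow only polynomially in $\ell$, which is precisely the regularity assumption flagged in the footnote preceding the corollary.
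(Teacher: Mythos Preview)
Your approach is genuinely different from the paper's. The paper works entirely inside the formal perturbative framework of \autoref{thm:mix}: it substitutes $\igl{\ell}=I+\exp(-\Omega(\ell))$, writes $(\igl{\ell})^{-1}=I+\exp(-\Omega(\ell))$, absorbs the width correction into the same $\exp(-\Omega(\ell))$ symbol (this is where ``sufficiently large $n$'' enters, via the footnote), Taylor-expands the exponential to obtain $P=\text{(standard Gaussian)}\cdot(1+\exp(-\Omega(\ell)))$, and reads off the TV distance in one line. There is no intermediate distribution, no Pinsker, and no moment analysis; the whole argument stays at the level of formal $O(\cdot)$ manipulations in the exponent. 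Your two-step route through $Q=\calN(0,I_n\otimes\igl{\ell})$ is more explicit about the separate $n$- and $\ell$-dependencies, and your Gaussian-to-Gaussian bound via the KL formula is correct and more informative than what the paper writes.

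Where your proposal has a real gap is the non-Gaussian piece. The quartic correction $R_4=\tfrac{1}{8n}\sum_\alpha J_\alpha\sum_{i_1,i_2}z_{i_1;\alpha_1}z_{i_1;\alpha_2}z_{i_2;\alpha_3}z_{i_2;\alpha_4}$ is \emph{not} small at typical $z$: the dominant Wick pairing under $Q$ is $(i_1\alpha_1)\text{--}(i_1\alpha_2)$ and $(i_2\alpha_3)\text{--}(i_2\alpha_4)$, which carries no $\delta_{i_1i_2}$ and contributes for all $n^2$ index pairs, so $\E_Q[R_4]=O(n)$ and $\E_Q[R_4^2]=O(n^2)$, not $O(1/n)$. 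Your claim that ``the leading Wick contractions pair $z$'s into $\delta_{i_1i_2}$-diagonals'' is therefore incorrect, and the bound $d_{TV}(P,Q)\le\tfrac12\E_Q[|e^R-1|]+\tfrac12|\E_Q[e^R]-1|$ is vacuous as written. What actually saves the situation is that the $O(n)$ part of $R$ is non-fluctuating and cancels against the normalizer; one must center $R$ first and control $\mathrm{Var}_Q(R)$, which is $O(1)$ rather than $O(1/n)$, and even then a further argument is needed. The paper never confronts this because it treats the entire $O(1/n)$ term in \autoref{thm:mix} as a formal symbol, consistent with the perturbative approximation announced at the outset.
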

\begin{proof}
    This corollary follows from rewriting the diminishing components in \autoref{thm:mix} in terms of $\ell$ at sufficiently large width scaling. First, using the perturbed matrix inverse identity, we can simplify the expression for $\Igil{\alpha_1 \alpha_2}{\ell}$.
\begin{equation}
    K^{-1} = (I + \exp(-\Omega(\ell)))^{-1} = I + \exp(-\Omega(\ell)),
\end{equation}
    where $I$ is the identity matrix. We then have
\begin{equation}
\begin{aligned}
    P(\zl{\ell} \mid \calD) &\propto \exp\left(-\frac{1}{2} \sum_{\alpha \in \calD} \sum_{i=1}^{\n} (\zial{i}{\alpha}{\ell})^2 + \exp(-\Omega(\ell))\right), \\
    &= \frac{\exp\left(-\frac{1}{2} \sum_{\alpha \in \calD} \sum_{i=1}^{\n} (\zial{i}{\alpha}{\ell})^2 + \exp(-\Omega(\ell))\right)}
    {\int \exp\left(-\frac{1}{2} \sum_{\alpha \in \calD} \sum_{i=1}^{\n} (\zia{i}{\alpha})^2 + \exp(-\Omega(\ell))\right) \prod_{\alpha, i} \tia{dz}{i}{\alpha}}, \\
    &= \frac{1}{\sqrt{(2 \pi)^n}} \exp\left(-\frac{1}{2} \sum_{\alpha \in \calD} \sum_{i=1}^{\n} (\zial{i}{\alpha}{\ell})^2\right)(1 + \exp(-\Omega(\ell))).
\end{aligned}
\end{equation}
    In the last equality, we Taylor expanded around $\exp(-\Omega(\ell)) = 0$. In this form, the total variation distance from a standard Gaussian is clear: 
\begin{equation}
    \int \left(P(\zl{\ell} \mid \calD) - \frac{1}{\sqrt{(2 \pi)^n}} \exp\left(-\frac{1}{2} \sum_{\alpha \in \calD} \sum_{i=1}^{\n} (\zial{i}{\alpha}{\ell})^2\right)\right) \prod_{\alpha, i} \tia{dz}{i}{\alpha} = \exp(-\Omega(\ell)).
\end{equation}
\end{proof}

\parbreak
    Before proving \autoref{thm:mix}, let's build up a few useful properties of the covariances $\igil{\alpha_1 \alpha_2}{\ell}$. Let $\calC:\bbR \to \bbR$ be the map from the covariance at one layer to the covariance at the next. 
\begin{equation}\label{eq:cintegral}
\begin{gathered}
    \igil{\alpha_1 \alpha_2}{\ell+1} = \calC(\igil{\alpha_1 \alpha_2}{\ell}), \\
    \calC(\igil{\alpha_1 \alpha_2}{\ell}) = \cb + \cw \brak{\sigma(z_1) \sigma(z_2)}_\Sigma = \frac{\brak{\sigma(z_1) \sigma(z_2)}_\Sigma}{\brak{\sigma(z)^2}}, \\
    \Sigma = \begin{pmatrix} 1 & \igil{\alpha_1 \alpha_2}{\ell} \\ \igil{\alpha_1 \alpha_2}{\ell} & 1 \end{pmatrix}.
\end{gathered}
\end{equation}
    In \autoref{fig:relu} we plot $\calC$ for the ReLU and tanh activation functions. The covariance between preactivations after many layers is equal to the repeated application of $\calC$ on the initial covariance $\igil{\alpha_1 \alpha_2}{1}$, and we can see in the plots that for ReLU and tanh, the repeated application of $\calC$ appears to bring most initial covariances to a fixed point at 1 or 0. In \autoref{sec:fpproof} we will prove that this is not a coincidence and that all nonlinear activation functions bring covariances to a fixed point. Eventually, to prove \autoref{thm:mix} we will need to understand exactly when the fixed point is at 0.
\begin{figure}
\centering
\begin{tikzpicture}
    \node at (-5.5,0) {\includegraphics[width=5.2cm]{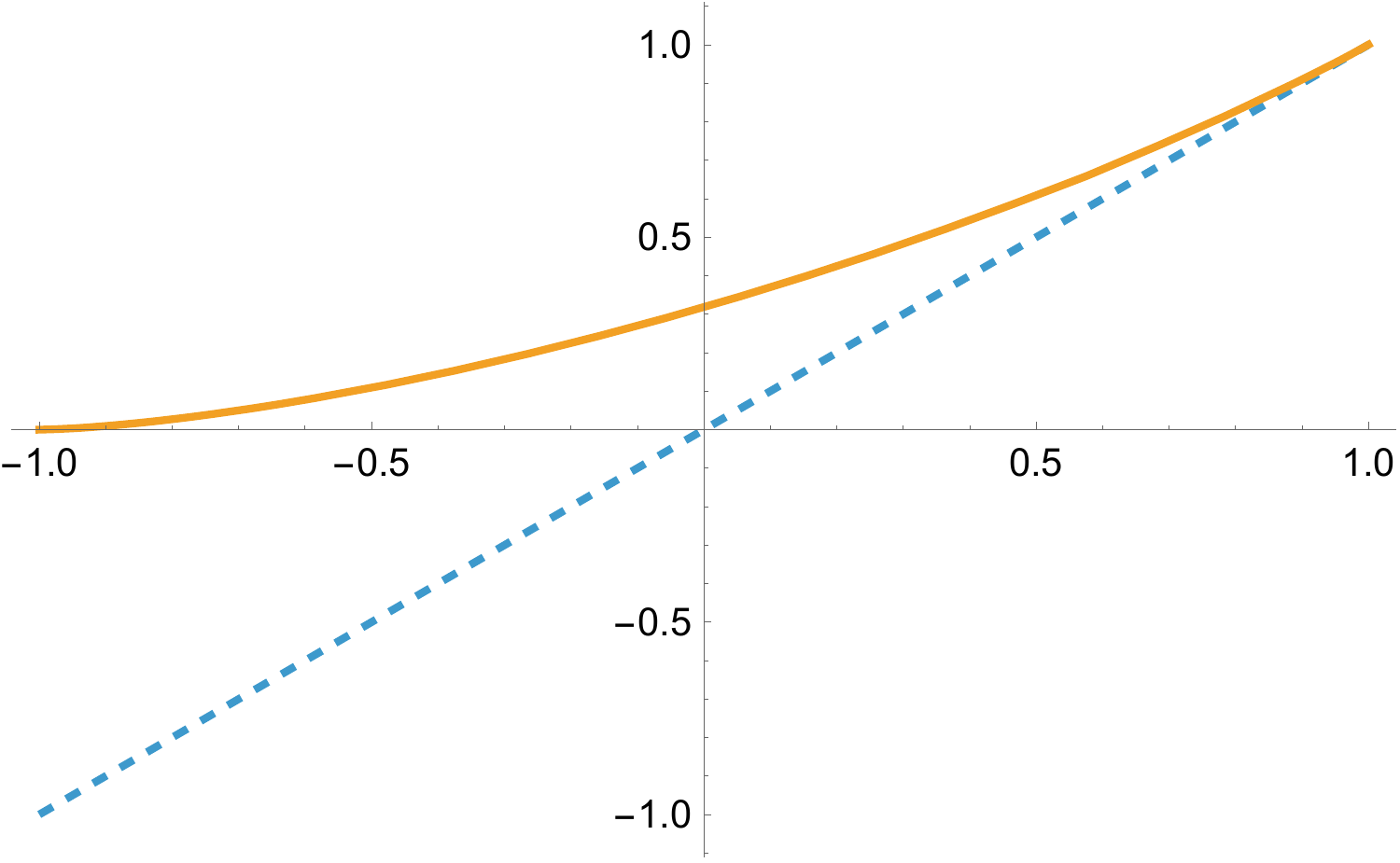}};
    \node at (0,0)    {\includegraphics[width=5.2cm]{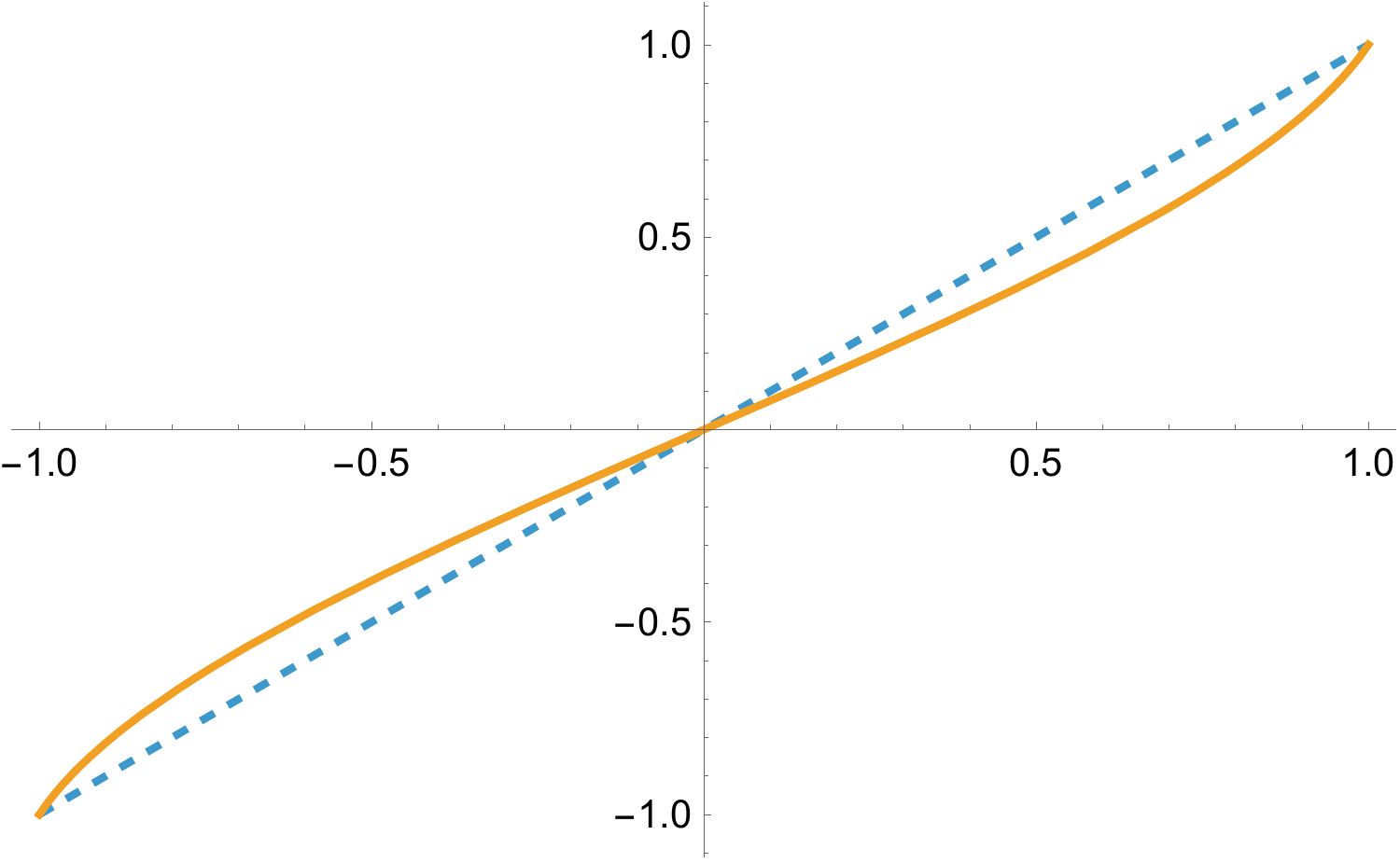}};
    \node at (5.5,0)  {\includegraphics[width=5.2cm]{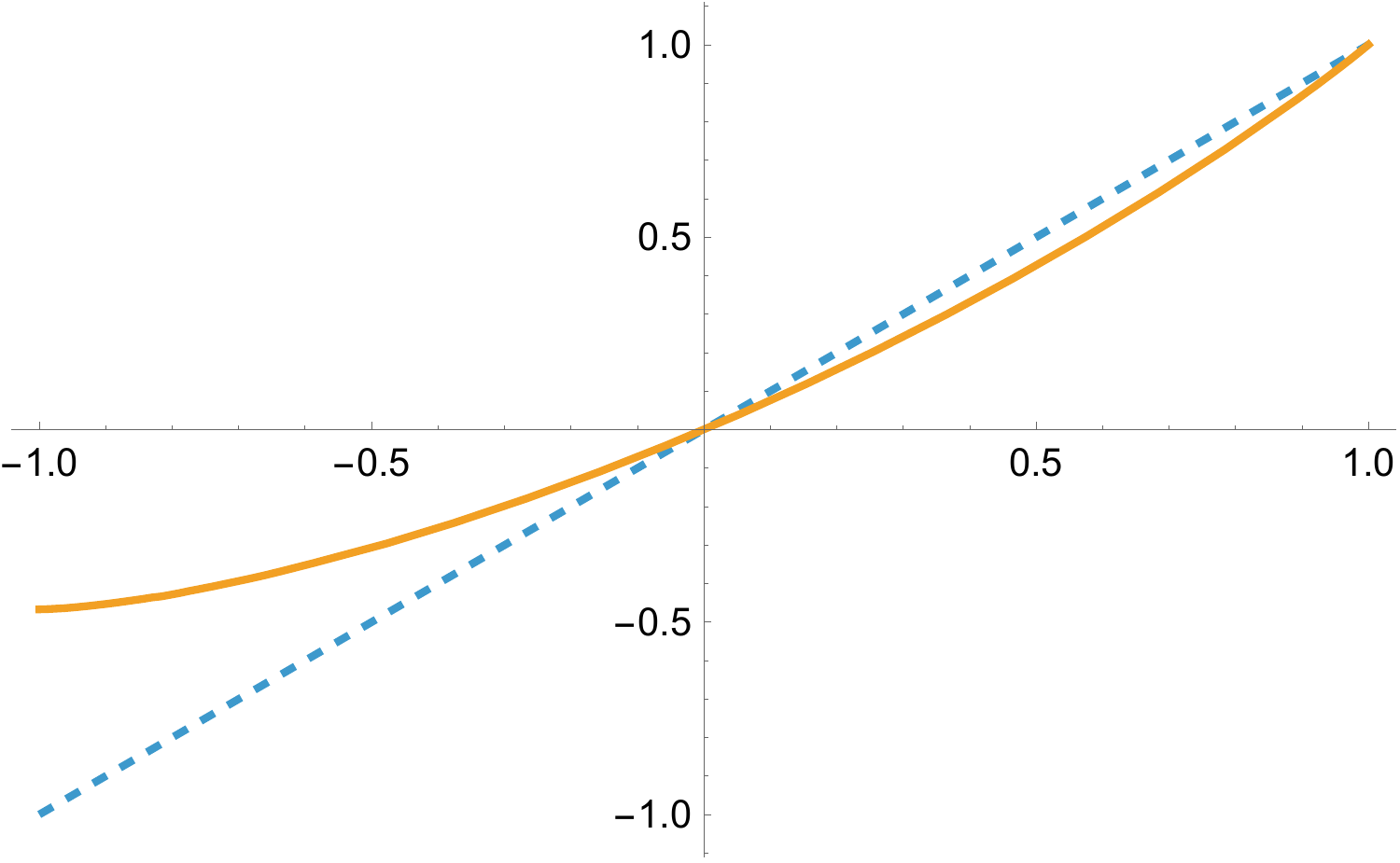}};
    \node[anchor=west] at (-5.5+0.0, 1.5) {$\igil{\alpha_1 \alpha_2}{\ell+1}$};
    \node[anchor=west] at (0   +0.0, 1.5) {$\igil{\alpha_1 \alpha_2}{\ell+1}$};
    \node[anchor=west] at (5.5 +0.0, 1.5) {$\igil{\alpha_1 \alpha_2}{\ell+1}$};
    \node[anchor=south east] at (-5.5+2.8, 0) {$\igil{\alpha_1 \alpha_2}{\ell}$};
    \node[anchor=south east] at (0   +2.8, 0) {$\igil{\alpha_1 \alpha_2}{\ell}$};
    \node[anchor=south east] at (5.5 +2.8, 0) {$\igil{\alpha_1 \alpha_2}{\ell}$};
    \node[anchor=north] at (-5.5, -1.6) {$\sigma(x)= \text{ReLU}(x)$};
    \node[anchor=north] at (0,    -1.6) {$\sigma(x) = \tanh(4x)$};
    \node[anchor=north] at (5.5,  -1.6) {$\sigma(x) = \text{ReLU}(x) - \frac{1}{\sqrt{2\pi}}$};
\end{tikzpicture}
\caption{Graphs of $\igil{\alpha_1 \alpha_2}{\ell+1} = \calC(\igil{\alpha_1 \alpha_2}{\ell})$ for when the activation function is ReLU (left), $\tanh(4x)$ (center), or a shifted ReLU (right). The $y=x$ line in dotted blue is included for comparison. For ReLU, the repeated application of $\calC$ brings initial points towards the stable fixed point at $\igil{\alpha_1 \alpha_2}{\ell}=1$. This means the preactivations on different inputs become more and more correlated as depth increases. For tanh (for which the $4x$ scaling was chosen to emphasize the effect and doesn't change it qualitatively), the repeated application of $\calC$ brings initial points towards 0 unless they start at $1$ or $-1$. This means preactivations on different inputs usually become less and less correlated as depth increases, but they stay identical if they started identical up to a sign. A similar effect occurs for a ReLU shifted to have zero mean under the Gaussian measure.}
\label{fig:relu}
\end{figure}

\subsection{Hermite Decompositions}

    To understand \eqref{eq:cintegral} and the behavior of $\calC$, we can decompose our activation function $\sigma$ into Hermite polynomials $\He_n$ which are orthogonal under Gaussian weighting \citep{bateman1955}. 
\begin{equation}
\begin{gathered}
\begin{aligned}
    \He_0(x) &= 1, \\
    \He_1(x) &= x, \\
    \He_2(x) &= x^2-1, \\
    \He_3(x) &= x^3-3x,
\end{aligned} \\
\begin{aligned}
    \brak{\He_n (z) \He_m (z)} &= \delta_{nm} n!, \\
    \norm{\He_n}^2 = \brak{\He_n(z)^2} &= n!.
\end{aligned}
\end{gathered}
\end{equation}
    Any function that is square integrable under the Gaussian measure ($\brak{\sigma(z)^2} < \infty$) can be decomposed into Hermite polynomials. This is a weak condition that, as mentioned before, we assume to be true of our activation functions.
\begin{align} 
    \sigma(z) &= \sum_{n=0}^{\infty} a_n \He_n (z), \\
    a_n &= \frac{1}{\norm{\He_n}^2} \brak{\sigma(z) \He_n (z)}.
\end{align}
    For correlated Gaussians, we can make a similar statement. The following identity derives from Mehler's equation. See \autoref{sec:mehlerproof} for the derivation.
\begin{equation}\label{eq:mehler}
\begin{gathered}
    \brak{\He_n(z_1) \He_m(z_2)}_\Sigma = \delta_{nm} \, n! \, k^n,  \\
    \Sigma = \begin{pmatrix} 1 & k \\ k & 1 \end{pmatrix}.
\end{gathered}
\end{equation}
After decomposing our activation function into Hermite polynomials, our map $\calC$ greatly simplifies into a polynomial. 
\begin{equation}\label{eq:cpoly}
\begin{aligned}
    \calC(k) &= \frac{\brak{\sigma(z_1) \sigma(z_2)}_\Sigma}{\brak{\sigma(z)^2}}, \\
    &= \frac{1}{\brak{\sigma(z)^2}} \brak*{
    \le(\sum_{n=0}^{\infty} a_n \He_n (z_1)\ri) 
    \le(\sum_{n=0}^{\infty} a_n \He_n (z_2)\ri)}_\Sigma, \\
    &= \frac{1}{\brak{\sigma(z)^2}} \sum_{n=0}^{\infty} (a_n)^2 \, n! \, k^n.
\end{aligned}
\end{equation}
Using this polynomial, we will prove that all nonlinear activation functions have an attractive fixed point.

\subsection{Fixed Points}\label{sec:fpproof}

    In this section we'll build up to and prove a lemma (\autoref{lem:fixpoint}) which states that the repeated application of $\calC$ always reaches a single fixed point for any initial covariance $\igil{\alpha_1 \alpha_2}{1} \in (-1,1)$. Once that is established, \autoref{thm:mix} immediately follows because our fixed point is at 0 exactly when $\calC(0) = 0$, and that is equivalent to when the first coefficient $a_0 = \brak{\sigma(z)}$ in \eqref{eq:cpoly} is 0. Although some lemmas require a nonaffine instead of nonlinear activation function, the affine, nonlinear case is simple to handle separately.

\begin{lemma}\label{lem:cconvex}
    The function $\calC$ is convex and non-negative in the $[0,1]$ domain, and strictly convex in that domain when the activation function $\sigma$ is nonaffine.
\end{lemma}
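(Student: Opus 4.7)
The plan is to leverage the Hermite decomposition \eqref{eq:cpoly}, which already exhibits $\calC$ as a power series with \emph{non-negative} coefficients:
\[
\calC(k) = \frac{1}{\brak{\sigma(z)^2}} \sum_{n=0}^{\infty} (a_n)^2 \, n! \, k^n.
\]
Non-negativity on $[0,1]$ is then immediate: every monomial $k^n$ is non-negative on $[0,1]$, every coefficient $(a_n)^2 n! / \brak{\sigma(z)^2}$ is non-negative, and the normalizing denominator is strictly positive (it would only vanish if $\sigma \equiv 0$, which is excluded by nonlinearity).

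For convexity, I would differentiate the power series twice termwise. The radius of convergence of $\sum (a_n)^2 n! \, k^n$ is at least $1$, since $\calC(1)=1$ by the critical tuning; termwise differentiation is therefore valid on $[0,1)$ (and justified on $[0,1]$ by Abel-type arguments or by working on $[0, 1-\delta]$ and letting $\delta \to 0$). This gives
\[
\calC''(k) = \frac{1}{\brak{\sigma(z)^2}} \sum_{n=2}^{\infty} (a_n)^2 \, n! \, n(n-1) \, k^{n-2},
\]
a power series with non-negative coefficients, hence $\calC''(k) \ge 0$ on $[0,1]$. This establishes convexity.

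For the strict convexity claim under the nonaffine hypothesis, the key observation is that an affine function $\sigma(z) = a\,\He_1(z) + b\,\He_0(z)$ is precisely one whose Hermite expansion is supported on $\{0,1\}$. Thus $\sigma$ nonaffine means there exists some $n_0 \ge 2$ with $a_{n_0} \ne 0$. Then the corresponding term $(a_{n_0})^2 n_0! \, n_0(n_0-1) \, k^{n_0-2}$ contributes strictly positively to $\calC''(k)$ for every $k \in (0,1]$, so $\calC''(k) > 0$ throughout $(0,1]$. Since $\calC''$ is non-negative on $[0,1]$ and not identically zero on any subinterval, $\calC$ is strictly convex on $[0,1]$.

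I do not anticipate a genuine obstacle here, since after invoking \eqref{eq:cpoly} the statement reduces to a fact about power series with non-negative coefficients; the only subtlety is justifying termwise differentiation up to $k=1$, which follows from the fact that $\sum (a_n)^2 n!$ converges to $\brak{\sigma(z)^2} < \infty$ and majorizes the differentiated series on any compact subinterval of $[0,1)$, with the endpoint $k=1$ handled by continuity of $\calC$ (or by noting that strict convexity on $[0,1]$ is implied by strict convexity on $[0,1)$ together with continuity at $1$).
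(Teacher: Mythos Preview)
Your proposal is correct and follows essentially the same route as the paper: both invoke the Hermite expansion \eqref{eq:cpoly} to write $\calC$ as a non-negative combination of the convex monomials $k^n$, and both observe that $\sigma$ is nonaffine exactly when some coefficient $a_{n_0}$ with $n_0\ge 2$ is nonzero, forcing a strictly convex term into the sum. The paper simply says ``positive combination of convex monomials'' without computing $\calC''$ or discussing termwise differentiation, so your version is a slightly more explicit rendering of the same argument.
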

\begin{proof}
    The convexity follows because $\calC$ in \eqref{eq:cpoly} is a positive combination of convex monomials. For the strict convexity, note that $\He_0(x)=1$ and $\He_1(x)=x$, so Hermite decompositions with only the first two terms represent exactly the affine functions. Thus, the decomposition of any nonaffine function has some non-zero coefficient in $\{a_n: n > 1\}$ which multiplies a strictly convex monomial $\braces{k^n : n > 1}$ in \eqref{eq:cpoly}, causing $\calC$ to be strictly convex.
\end{proof}

\begin{lemma} \label{lem:pos}
    The function $\calC$ is above the diagonal, $\calC(k) > k$, in the $(-1,0)$ domain when the activation function $\sigma$ is nonlinear.
\end{lemma}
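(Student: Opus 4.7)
The key observation is that the polynomial form of $\calC$ from \eqref{eq:cpoly},
\[
\calC(k) \;=\; \sum_{n=0}^{\infty} b_n k^n, \qquad b_n \;:=\; \frac{(a_n)^2 \, n!}{\brak{\sigma(z)^2}},
\]
has non-negative coefficients summing to $1$, since $\brak{\sigma(z)^2} = \sum_n (a_n)^2 \, n!$ and so $\calC(1) = 1$. Thus $(b_n)_{n \geq 0}$ is a probability distribution on $\mathbb{N}$, and proving $\calC(k) > k$ on $(-1, 0)$ amounts to showing that the average of the monomials $k^n$ under this distribution strictly exceeds $k$.

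To execute this I would use the normalization $b_1 - 1 = -b_0 - \sum_{n \geq 2} b_n$ to regroup:
\[
\calC(k) - k \;=\; b_0 (1 - k) \;+\; \sum_{n \geq 2} b_n (k^n - k) \;=\; b_0 (1 - k) \;+\; \sum_{n \geq 2} b_n \, k \, (k^{n-1} - 1).
\]
Each summand is then manifestly non-negative on $(-1, 0)$: $1 - k > 0$ since $k < 0$, and for $n \geq 2$, $|k^{n-1}| < 1$ gives $k^{n-1} - 1 < 0$, which combined with $k < 0$ yields $k (k^{n-1} - 1) > 0$.

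For the strict inequality I would split on the two ways $\sigma$ can be nonlinear. If $\sigma$ is affine but not purely linear, then $a_0 \neq 0$, so $b_0 > 0$ and the first term is strictly positive. Otherwise $\sigma$ is nonaffine, so some $a_n \neq 0$ for $n \geq 2$, making the corresponding $b_n > 0$ and the sum strictly positive. (Conversely, if $\sigma(z) = cz$, then $b_n = 0$ for $n \neq 1$ and $\calC(k) = k$ identically, so nonlinearity really is needed.) There is no substantive obstacle here; the only non-obvious step is recognizing the normalization $\calC(1) = 1$, after which the regrouping and sign analysis are routine.
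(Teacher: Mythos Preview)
Your proof is correct and follows essentially the same approach as the paper: both arguments hinge on recognizing that $\calC(1)=1$ makes the coefficients $b_n$ a convex combination, and that for $k \in (-1,0)$ every monomial $k^n$ with $n \neq 1$ lies strictly above the diagonal. Your explicit regrouping and case split for strict inequality just spell out in algebra what the paper states verbally as ``a convex combination of monomials that are either at the diagonal or above it, with at least one above when $\sigma$ is nonlinear.''
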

\begin{proof}
    First, we trivially have $\calC(1)=1$ because neural networks are deterministic, and identical activations stay identical. At the point $(1,1)$, our monomials in \eqref{eq:cpoly} are all 1 and we see that the coefficients $(a_n)^2 n! / \brak{\sigma(z)^2}$ are actually a convex combination.
    Thus in the $(-1,0)$ domain, $\calC$ is above the diagonal because it is a convex combination of monomials that are either at the diagonal (for the linear term) or above it. A monomial that's above the diagonal is always included in the combination when the activation function is nonlinear.
\end{proof}

\begin{lemma}\label{lem:fixpoint}
    For any starting input $k_0 \in (-1,1)$, the repeated application of $\calC$ on $k_0$ approaches a unique fixed point $k^* \in [0,1]$ when the activation function $\sigma$ is nonaffine.
\end{lemma}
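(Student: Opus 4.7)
The plan is to track the iterates separately on $[0,1]$ and on $(-1,0)$, exploiting the series form of $\calC$ from \eqref{eq:cpoly} together with \autoref{lem:cconvex} and \autoref{lem:pos}. Writing $\calC(k) = \sum_{n \geq 0} b_n k^n$ with $b_n = (a_n)^2 \, n! / \brak{\sigma(z)^2} \geq 0$, I would first note that $\sum_n b_n = \calC(1) = 1$ (which also reflects the fact that identical inputs yield identical preactivations). Consequently $\calC$ is non-decreasing on $[0,1]$, maps $[0,1]$ into $[0,1]$, and satisfies $|\calC(k)| \leq \sum_n b_n |k|^n \leq 1$ on all of $[-1,1]$. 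The function $g(k) := \calC(k) - k$ is strictly convex on $[0,1]$ with $g(1) = 0$, so it has at most one other zero there; I would then define $k^* \in [0,1]$ to be the smallest fixed point of $\calC$ in $[0,1]$.

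For $k_0 \in [0,1)$, strict convexity of $g$ forces $g > 0$ on $[0, k^*)$ and $g < 0$ on $(k^*, 1)$ whenever $k^* < 1$ (a strictly convex function vanishing at two points is strictly negative strictly between them, and if it vanishes at only one point it keeps a definite sign on the rest of the interval). Combined with monotonicity of $\calC$ on $[0,1]$, this makes the iterates $k_{t+1} = \calC(k_t)$ a monotone sequence bracketing $k^*$: monotone increasing and bounded above by $k^*$ if $k_0 < k^*$, monotone decreasing and bounded below by $k^*$ if $k_0 > k^*$, and stationary at $k_0 = k^*$. In each case the limit is a fixed point of $\calC$ on the corresponding side of $k^*$, and since $k^*$ is the unique such fixed point we conclude $k_t \to k^*$. (In the edge case $k^* = 1$, the same argument gives $k_t \uparrow 1$.)

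For $k_0 \in (-1, 0)$, \autoref{lem:pos} gives $\calC(k_0) > k_0$, and by iterating this observation the sequence is monotone increasing so long as it remains negative. If some finite iterate lands in $[0, 1)$ the previous paragraph takes over. Otherwise all iterates stay in $(-1, 0)$; being monotone increasing and bounded above by $0$, they converge to some $k_\infty \in (-1, 0]$, which by continuity of $\calC$ is a fixed point. \autoref{lem:pos} rules out fixed points in $(-1, 0)$, so $k_\infty = 0$, and then $\calC(0) = 0$ forces $k^* = 0$, matching the desired conclusion.

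The main obstacle is really just the case analysis---whether $k^*$ lies at $0$, in $(0,1)$, or at $1$, and whether iterates starting in $(-1, 0)$ cross into $[0,1)$ or instead tend to $0$ from below. None of it requires new estimates; everything follows mechanically from combining strict convexity (\autoref{lem:cconvex}) with the diagonal comparison on negative reals (\autoref{lem:pos}) and the non-negativity of the Hermite coefficients $b_n$ which makes $\calC$ monotone on $[0,1]$.
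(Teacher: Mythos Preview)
Your argument is correct and cleaner in some respects than the paper's. Both proofs lean on \autoref{lem:cconvex} and \autoref{lem:pos}, but the paper organizes the case split by the value of $\calC'(1)$: when $\calC'(1)\le 1$ it shows $\calC$ sits strictly above the diagonal on $(-1,1)$ so iterates climb to the fixed point at $1$; when $\calC'(1)>1$ it first locates a second fixed point $k^*\in[0,1)$ by the intermediate value theorem, then handles $(k^*,1)$ by noting $\calC$ is below the diagonal there, and finally treats $[0,k^*]$ by bounding $0\le \calC'<1$ and invoking the Banach fixed-point theorem. Your route replaces the contraction step with a pure monotone-convergence argument: because the coefficients $b_n$ are nonnegative and sum to $1$, $\calC$ is nondecreasing on $[0,1]$ and maps $[0,1]$ into itself, so iterates on either side of $k^*$ form monotone sequences trapped against $k^*$. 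This is more elementary and avoids derivative estimates altogether. One small thing worth making explicit: when an iterate leaves $(-1,0)$ you need it to land in $[0,1)$ rather than at $1$; this follows from your bound $|\calC(k)|\le \sum_n b_n|k|^n$, which is strictly less than $1$ for $|k|<1$ since nonaffinity forces some $b_n>0$ with $n\ge 2$. The paper's contraction bound does buy an explicit exponential rate near $k^*$, but that rate is re-derived separately in the proof of \autoref{thm:mix}, so your version is entirely adequate for the lemma as stated.
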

\begin{proof}
    Let $\calC' = \frac{d}{dk} \calC(k)$ be the derivative of $\calC$. Let's split the proof into cases based on the value of the derivative at $k=1$.

\parbreak
\textit{Case 1:} If $\calC'(1) \leq 1$, then $\calC$ is always above the diagonal (except at $\calC(1)=1$). This is because \autoref{lem:pos} directly applies in the $(-1,0)$ domain and because of strict convexity in the $[0,1)$ domain. Thus, the repeated application of $\calC$ increases any initial value $k_0$ up to the fixed point at the domain's boundary, $\calC(1)=1$.
\begin{equation}
    \calC(k) > \calC'(1) (k-1) + 1 \geq k.
\end{equation}
    
\parbreak
    \textit{Case 2:} If $\calC'(1) > 1$, conversely, then there's a second fixed point $k^*$ in the domain $[0,1)$. 
    This is because \eqref{eq:cpoly} causes $\calC(0)$ to be at or above the diagonal, and the steep derivative $\calC'(1) > 1$ causes $\calC(1-\eps)$ for small $\eps$ to be below the diagonal. Above at one point and below at another, $\calC$ must be exactly on the diagonal somewhere in between. That fixed point is also unique because along with $\calC(1)=1$, a third fixed point would violate the strict convexity in the $[0,1)$ domain.

    Now that we've proven the existence of a fixed point $k^* \in [0,1)$, we must prove the convergence to it. In the $(-1,0)$ domain, \autoref{lem:pos} directly applies and the repeated application of $\calC$ increases values until they reach the $[0,1)$ domain. In the part of the $[0,1)$ domain where $\calC$ is below the diagonal (i.e. the $(k^*,1)$ domain), the repeated application of $\calC$ decreases values without bringing them negative because of \autoref{lem:cconvex}, so values are eventually all brought to the $[0,k^*]$ domain. There, our derivative is strictly increasing because of convexity, and it stays between 1 and 0.
\begin{equation}
    \calC'(0) = \frac{(a_1)^2}{\brak{\sigma(z)^2}} \geq 0,
\end{equation}
\begin{equation}
    \calC'(k^*) = \frac{1}{1-k^*} \int_{k^*}^1 \calC'(k^*) dk < \frac{1}{1-k^*} \int_{k^*}^1 \calC'(k)dk = \frac{1-k^*}{1-k^*} = 1.
\end{equation}
    In the $[0,k^*]$ domain, these bounds on the derivative imply bounds on the range.
\begin{equation}
    k < k^* - \int_{k}^{k^*} dk < \calC(k) = k^* - \int_{k}^{k^*} \calC'(j) \, dj \leq k^*.
\end{equation}
    Thus, $\calC$ is a contracting map in the $[0,k^*]$ domain, and by the Banach fixed-point theorem, has an attractive fixed point reached by repeated application.
\end{proof}

\subsection{Proof of \autoref{thm:mix}}

\begin{proof}
    This theorem of course inherits the structure of the overall probability distribution of $P(\zl{\ell} \mid \calD)$ from \autoref{thm:roberts}. 
    The dataset containing no scalar multiples is equivalent to the condition that initial covariances $\igil{\alpha_1 \alpha_2}{1}$ are not 1 or $-1$, and so \autoref{lem:fixpoint} applies for non-affine activation functions. We will handle affine, nonlinear functions separately. At each layer, the change in covariance is determined by applying the function $\calC$, and the covariance approaches its globally attractive fixed point. That fixed point is 0 exactly when $\calC(0)=0$, and from \eqref{eq:cpoly}, that condition is equivalent to the activation function having zero mean under the Gaussian measure.
\begin{equation}
    \calC(0)=\frac{(a_0)^2 }{\brak{\sigma(z)^2}} = \frac{\brak{\sigma(z)}^2}{\brak{\sigma(z)^2}}.
\end{equation}
    At a zero fixed point, the derivative must be less than 1 by strict convexity (\autoref{lem:cconvex}), and so the covariance approaches the fixed point exponentially fast when it is close enough. Thus proving \autoref{thm:mix} for nonaffine functions.

    For affine, nonlinear functions $\sigma(z) = az + b$, the Hermite decomposition is simple enough to compute exactly. From the decomposition we see that their mean under the Gaussian measure is always positive, and repeated applications of $\calC$ always brings initial values to a fixed point at 1.
\begin{equation}
        \calC(k) = \frac{a^2 k + b^2}{\brak{\sigma(z)^2}}.
\end{equation}
    Thus, this proves \autoref{thm:mix} for affine, nonlinear functions.
\end{proof}

\section{Conclusion}

    Under the perturbative expansion and approximation from \citet{roberts22_pdlt}, we have established that randomly initialized neural networks behave like random functions and have nearly independent outputs exactly when the activation function $\sigma$ is nonlinear and has zero mean under the Gaussian measure,
\begin{equation*}
    \brak{\sigma(z)}=0.
\end{equation*}
    We propose these very wide neural networks as a promising candidate for the computational no-coincidence conjecture. Compared to random reversible circuits, these neural networks are closer to what we ultimately care about in practice. However, a remaining advantage of random reversible circuits is that, for polynomial-size circuits, their outputs have exponentially small dependence\footnotemark{} \citep{gay25_rrc} while our results only establish polynomially small dependence. Achieving exponentially small dependence between the outputs of a neural network may require exponentially large width because of the $O(1/n)$ term in \eqref{eq:pzlmd}. For completeness, let us now state a version of the no-coincidence conjecture for neural networks.
    \footnotetext{That is, the probability distribution over the outputs of a random reversible circuit has exponentially small total variation distance from the uniform distribution.}

\begin{conjecture}[The neural network computational no-coincidence conjecture]
    For a randomly initialized neural network $C : \bbR^{2n} \to \bbR^n$ with hidden dimension $2n$, depth $\ell = \Theta(\log(n))$, Gaussian random weights, and tanh nonlinearities, let $P(C)$ be the property that for none of the inputs in the dataset $x \in \braces{-1,1}^{2n}$, the output $C(x)$ is all negative. There exists a polynomial-time verification algorithm $V$ that receives as input the neural network $C$ and an advice string $\pi$ such that for all $C$ where $P(C)$ is true, there exists $\pi$ with length polynomial in the size of $C$ such that $V(C,\pi)=1$, and for 99\% of random neural networks $C$, no such $\pi$ exists.
\end{conjecture}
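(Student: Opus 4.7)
The plan is to reduce the conjecture to questions about the Gaussian-process approximation made available by \autoref{thm:roberts} and \autoref{thm:mix}, and then attack completeness and soundness separately. For the specified architecture---tanh nonlinearities, critical hyperparameters, width $n$, depth $\ell = \Theta(\log n)$---the joint distribution $\braces{C(x)}_{x \in \braces{-1,1}^n}$ over the random weights is, up to the $O(1/n)$ perturbative corrections, a Gaussian process with kernel determined by the layerwise map $\calC$. By \autoref{thm:mix} and its corollary, the pairwise correlation between outputs on distinct non-antipodal inputs decays like $\exp(-\Omega(\ell)) = \mathrm{poly}(1/n)$, and (modulo the sign symmetry $x \mapsto -x$) the inputs in $\braces{-1,1}^n$ are all non-scalar-multiples of one another in the sense required by the theorem.

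For completeness, I would design the advice $\pi$ to encode (i) certified numerical approximations to the layerwise covariance map $\calC$ and to the induced kernel $\igil{\alpha_1 \alpha_2}{\ell}$ on a polynomial-size witness set of inputs, together with (ii) a short derivation bounding the probability that any $C(x)$ is all-negative. The verifier $V$ would spot-check $\pi$ by running a polynomial number of forward passes of $C$, verifying the claimed covariances layer by layer, and then confirming a tail estimate of the form $\Pr[C(x) \text{ all negative}] \leq 2^{-n}(1+o(1))$ combined with a moment bound over the input set.

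For soundness, I would attempt a counting argument in the style of the original no-coincidence conjecture: partition random networks according to the advice strings they admit, and show that no polynomial-length $\pi$ can simultaneously certify $P(C)$ for more than a $1\%$ fraction of random $C$ without exceeding the allowed false-positive budget. Since $P(C)$ is itself a rare event (by the near-independence of outputs across inputs), the verifier has only a small tolerance for false positives, and an entropy argument bounding the number of networks consistent with a fixed $\pi$ would be the natural tool.

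The main obstacle is exactly the gap flagged in the Conclusion: the independence guaranteed by \autoref{thm:mix} is only $\mathrm{poly}(1/n)$-small, whereas a naive union bound over $2^n$ inputs demands $\exp(-\Omega(n))$-small correlations. I expect closing this gap to be the hardest step. Two avenues seem plausible: replace the union bound with a second-moment or chaining argument that exploits the smoothness of the Gaussian-process kernel on nearby Hamming neighbors, or push the perturbative expansion of \citet{roberts22_pdlt} past first order so the higher-cumulant $O(1/n^2)$ contributions are quantitatively controlled. Even granting such an improvement, the soundness/counting step inherits all the difficulty of the original reversible-circuit no-coincidence conjecture, so I would expect any unconditional proof to require genuinely new ideas on that side as well.
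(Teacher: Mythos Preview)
The statement you are attempting to prove is presented in the paper as a \emph{conjecture}, not a theorem: the authors explicitly write that they ``are agnostic on the truth or falsity of this conjecture'' and offer no proof, proposing it only as an analogue of the reversible-circuit no-coincidence conjecture and a target for future work. There is therefore no paper proof to compare your proposal against.

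Your proposal is, correspondingly, not a proof but a sketch of a research program, and you acknowledge as much in your final paragraph. Beyond the obstacles you already flag---the $\mathrm{poly}(1/n)$ versus $\exp(-\Omega(n))$ gap in the correlation bounds, and the fact that the soundness step inherits the full difficulty of the original no-coincidence conjecture---there is a more basic conceptual issue in your completeness sketch. Completeness demands that for every \emph{specific} network $C$ satisfying $P(C)$, some certificate $\pi$ causes the verifier to accept. Your proposed $\pi$ encodes distributional information (approximations to the kernel $\igil{\alpha_1\alpha_2}{\ell}$, a tail bound on $\Pr[C(x)\text{ all negative}]$) and the verifier ``spot-checks'' layerwise covariances; but these are facts about the random ensemble, not about the fixed instance $C$ at hand. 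A particular $C$ with $P(C)$ true need not have empirical layer statistics close to the ensemble kernel, and certifying that a \emph{typical} network rarely produces all-negative outputs says nothing about whether \emph{this} $C$ avoids them on all $2^n$ inputs. A genuine completeness argument would need $\pi$ to encode instance-specific structure---the neural-network analogue of whatever ``short explanations'' are hoped for in the reversible-circuit setting---and identifying that structure is precisely the open problem the conjecture is meant to pose.
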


\parbreak
    We are agnostic on the truth or falsity of this conjecture, but we put it forward as the ``correct" analogue of the Alignment Research Center's computational no-coincidence conjecture for neural networks and hope that it inspires followup work.

\section{Acknowledgements}
    We are grateful to Cody Rushing for helpful comments on drafts and to Dmitry Vaintrob for pointing us to relevant parts of \citet{roberts22_pdlt}.
\bibliography{bib}

@misc{hanin19,
      title={Deep ReLU Networks Have Surprisingly Few Activation Patterns}, 
      author={Boris Hanin and David Rolnick},
      year={2019},
      eprint={1906.00904},
      archivePrefix={arXiv},
      primaryClass={stat.ML},
      url={https://arxiv.org/abs/1906.00904}, 
}

@misc{depalma19,
      title={Random deep neural networks are biased towards simple functions}, 
      author={Giacomo De Palma and Bobak Toussi Kiani and Seth Lloyd},
      year={2019},
      eprint={1812.10156},
      archivePrefix={arXiv},
      primaryClass={stat.ML},
      url={https://arxiv.org/abs/1812.10156}, 
}

@misc{teney24,
      title={Neural Redshift: Random Networks are not Random Functions}, 
      author={Damien Teney and Armand Nicolicioiu and Valentin Hartmann and Ehsan Abbasnejad},
      year={2024},
      eprint={2403.02241},
      archivePrefix={arXiv},
      primaryClass={cs.LG},
      url={https://arxiv.org/abs/2403.02241}, 
}

@misc{poole16,
      title={Exponential expressivity in deep neural networks through transient chaos}, 
      author={Ben Poole and Subhaneil Lahiri and Maithra Raghu and Jascha Sohl-Dickstein and Surya Ganguli},
      year={2016},
      eprint={1606.05340},
      archivePrefix={arXiv},
      primaryClass={stat.ML},
      url={https://arxiv.org/abs/1606.05340}, 
}

@misc{lee18,
      title={Deep Neural Networks as Gaussian Processes}, 
      author={Jaehoon Lee and Yasaman Bahri and Roman Novak and Samuel S. Schoenholz and Jeffrey Pennington and Jascha Sohl-Dickstein},
      year={2018},
      eprint={1711.00165},
      archivePrefix={arXiv},
      primaryClass={stat.ML},
      url={https://arxiv.org/abs/1711.00165}, 
}

@misc{goldwasser24_backdoors,
      title={Planting Undetectable Backdoors in Machine Learning Models}, 
      author={Shafi Goldwasser and Michael P. Kim and Vinod Vaikuntanathan and Or Zamir},
      year={2024},
      eprint={2204.06974},
      archivePrefix={arXiv},
      primaryClass={cs.LG},
      url={https://arxiv.org/abs/2204.06974}, 
}

@article{anthropic25_bio,
  author={Lindsey, Jack and Gurnee, Wes and Ameisen, Emmanuel and Chen, Brian and Pearce, Adam and Turner, Nicholas L. and Citro, Craig and Abrahams, David and Carter, Shan and Hosmer, Basil and Marcus, Jonathan and Sklar, Michael and Templeton, Adly and Bricken, Trenton and McDougall, Callum and Cunningham, Hoagy and Henighan, Thomas and Jermyn, Adam and Jones, Andy and Persic, Andrew and Qi, Zhenyi and Thompson, T. Ben and Zimmerman, Sam and Rivoire, Kelley and Conerly, Thomas and Olah, Chris and Batson, Joshua},
  title={On the Biology of a Large Language Model},
  journal={Transformer Circuits Thread},
  year={2025},
  url={https://transformer-circuits.pub/2025/attribution-graphs/biology.html}
}

@book{roberts22_pdlt,
   title={The Principles of Deep Learning Theory: An Effective Theory Approach to Understanding Neural Networks},
   ISBN={9781316519332},
   url={http://dx.doi.org/10.1017/9781009023405},
   DOI={10.1017/9781009023405},
   publisher={Cambridge University Press},
   author={Roberts, Daniel A. and Yaida, Sho and Hanin, Boris},
   year={2022},
   month=may 
}

@book{bateman1955,
  author       = {Bateman, Harry and Erd{\'e}lyi, Arthur},
  address      = {New York},
  organization = {Bateman Manuscript Project.},
  title        = {Higher Transcendental Functions},
  volume       = {2},
  publisher    = {McGraw-Hill Book Company},
  year         = {1953},
  month        = aug,
  note         = {Based on notes left by Harry Bateman and compiled by the staff of the Bateman Manuscript Project},
  noturl       = {https://authors.library.caltech.edu/records/cnd32-h9x80},
}

@article{kibble1945,
	author = {Kibble, W. F.},
	address = {Cambridge, UK},
	copyright = {Copyright © Cambridge Philosophical Society 1945},
	issn = {0305-0041},
	journal = {Mathematical Proceedings of the Cambridge Philosophical Society},
	language = {eng},
	number = {1},
	pages = {12-15},
	publisher = {Cambridge University Press},
	title = {An extension of a theorem of Mehler's on Hermite polynomials},
	volume = {41},
	year = {1945},
}

@misc{neyman25_nocoincidence,
	title={A computational no-coincidence principle},
	author={Eric Neyman},
	year={2025},
	url = {https://www.alignment.org/blog/a-computational-no-coincidence-principle/},
}

@misc{gay25_rrc,
      title={Pseudorandomness Properties of Random Reversible Circuits}, 
      author={William Gay and William He and Nicholas Kocurek and Ryan O'Donnell},
      year={2025},
      eprint={2502.07159},
      archivePrefix={arXiv},
      primaryClass={cs.CR},
      url={https://arxiv.org/abs/2502.07159}, 
}

@article{anthropic24_mono,
   title={Scaling Monosemanticity: Extracting Interpretable Features from Claude 3 Sonnet},
   author={Templeton, Adly and Conerly, Tom and Marcus, Jonathan and Lindsey, Jack and Bricken, Trenton and Chen, Brian and Pearce, Adam and Citro, Craig and Ameisen, Emmanuel and Jones, Andy and Cunningham, Hoagy and Turner, Nicholas L and McDougall, Callum and MacDiarmid, Monte and Freeman, C. Daniel and Sumers, Theodore R. and Rees, Edward and Batson, Joshua and Jermyn, Adam and Carter, Shan and Olah, Chris and Henighan, Tom},
   year={2024},
   journal={Transformer Circuits Thread},
   url={https://transformer-circuits.pub/2024/scaling-monosemanticity/index.html}
}

@misc{goldowskydill25_deception,
      title={Detecting Strategic Deception Using Linear Probes}, 
      author={Nicholas Goldowsky-Dill and Bilal Chughtai and Stefan Heimersheim and Marius Hobbhahn},
      year={2025},
      eprint={2502.03407},
      archivePrefix={arXiv},
      primaryClass={cs.LG},
      url={https://arxiv.org/abs/2502.03407}, 
}

@misc{turner24_steering,
      title={Steering Language Models With Activation Engineering}, 
      author={Alexander Matt Turner and Lisa Thiergart and Gavin Leech and David Udell and Juan J. Vazquez and Ulisse Mini and Monte MacDiarmid},
      year={2024},
      eprint={2308.10248},
      archivePrefix={arXiv},
      primaryClass={cs.CL},
      url={https://arxiv.org/abs/2308.10248}, 
}

@misc{cloud24_gradrouting,
      title={Gradient Routing: Masking Gradients to Localize Computation in Neural Networks}, 
      author={Alex Cloud and Jacob Goldman-Wetzler and Evžen Wybitul and Joseph Miller and Alexander Matt Turner},
      year={2024},
      eprint={2410.04332},
      archivePrefix={arXiv},
      primaryClass={cs.LG},
      url={https://arxiv.org/abs/2410.04332}, 
}

@misc{zou24_circuitbreakers,
      title={Improving Alignment and Robustness with Circuit Breakers}, 
      author={Andy Zou and Long Phan and Justin Wang and Derek Duenas and Maxwell Lin and Maksym Andriushchenko and Rowan Wang and Zico Kolter and Matt Fredrikson and Dan Hendrycks},
      year={2024},
      eprint={2406.04313},
      archivePrefix={arXiv},
      primaryClass={cs.LG},
      url={https://arxiv.org/abs/2406.04313}, 
}

@InProceedings{chrisiano25_backdoors,
  author =	{Christiano, Paul and Hilton, Jacob and Lecomte, Victor and Xu, Mark},
  title =	{{Backdoor Defense, Learnability and Obfuscation}},
  booktitle =	{16th Innovations in Theoretical Computer Science Conference (ITCS 2025)},
  pages =	{38:1--38:21},
  series =	{Leibniz International Proceedings in Informatics (LIPIcs)},
  ISBN =	{978-3-95977-361-4},
  ISSN =	{1868-8969},
  year =	{2025},
  volume =	{325},
  editor =	{Meka, Raghu},
  publisher =	{Schloss Dagstuhl -- Leibniz-Zentrum f{\"u}r Informatik},
  address =	{Dagstuhl, Germany},
  URL =		{https://drops.dagstuhl.de/entities/document/10.4230/LIPIcs.ITCS.2025.38},
  URN =		{urn:nbn:de:0030-drops-226662},
  doi =		{10.4230/LIPIcs.ITCS.2025.38},
  annote =	{Keywords: backdoors, machine learning, PAC learning, indistinguishability obfuscation}
}

@misc{he15,
      title={Delving Deep into Rectifiers: Surpassing Human-Level Performance on ImageNet Classification}, 
      author={Kaiming He and Xiangyu Zhang and Shaoqing Ren and Jian Sun},
      year={2015},
      eprint={1502.01852},
      archivePrefix={arXiv},
      primaryClass={cs.CV},
      url={https://arxiv.org/abs/1502.01852}, 
}

\clearpage
\appendix

\section{Simplified Proof of \autoref{thm:roberts}}\label{sec:robertsproof}
    In this section we will establish the zeroth order pertubative expansion of $P(\zl{\ell} \mid \calD)$, where we will use big-O notation to hide $O(1/\n)$ terms in addition to the $O(1/\n^2)$ terms hidden in \autoref{thm:roberts}.
    Since this proof involves heavy use of inverses, we will use raised indices as shorthand for indexing into the inverse of a matrix.
\begin{equation}\label{eq:Gnotation}
\begin{gathered}
    \IGIL{i j}{\ell} = \ti{((\igl{\ell})^{-1})}{i j} \\
    \sum_{j} \IGIL{i j}{\ell} \igil{j k}{\ell} = \delta_{i k}
\end{gathered}
\end{equation}
    Here, $\delta_{i k}$ is the Kronecker delta.
\begin{proof} 
    This proof will go by induction on the depth of the neural network.

\parbreak
    \textit{Base case:} At the first layer, the distribution of preactivations is exactly Gaussian because the weights and biases, $\wil{i j}{1}$ and $\biasil{i}{1}$, are a multivariate Gaussian by definition, and each activation $\zial{i}{\alpha}{1}$ is a linear combination of the weights parameterized by the input $\ti{x}{\alpha}$. The covariances are as follows:
\begin{equation}
\begin{aligned}
    \E[\zial{i_1}{\alpha}{1} \zial{i_2}{\alpha_2}{1}]
        &= \E[(\biasi{i_1} + \sum_j \wi{i_1 j} \, \tia{x}{j}{\alpha_1})(\biasi{i_2} + \sum_j \wi{i_2 j} \, \tia{x}{j}{\alpha_2})] \\
        &= \delta_{i_1 i_2} (C_b + \frac{C_W}{n} \sum_j \tia{x}{j}{\alpha_1} \tia{x}{j}{\alpha_2}) \\
        &= \delta_{i_1 i_2} \igil{\alpha_1 \alpha_2}{1}
\end{aligned}
\end{equation}
    Thus, the distribution of first layer preactivations satisfies the inductive hypothesis. Using the notation from \eqref{eq:Gnotation} for matrix inverses, we can describe the distribution.
\begin{equation}
    P(\zl{1} \mid \calD) \propto \exp\le(- \frac{1}{2} \sum_{\alpha_1 \alpha_2 \in \calD} \IGIL{\alpha_1 \alpha_2}{1} \sum_{i=1}^{\nl{\ell-1}} \zial{i}{\alpha_1}{1} \zial{i}{\alpha_2}{1} \ri)
\end{equation}

\parbreak
    \textit{Inductive step:} We will look at the following recurrence to prove that $P(\zl{\ell} \mid \calD)$ is distributed as desired.
\begin{equation}\label{eq:pzl}
    P(\zl{\ell} \mid \calD) = \int P(\zl{\ell-1} \mid \calD) P(\zl{\ell} \mid \zl{\ell-1}) \prod_{i, \alpha} d\zial{i}{\alpha}{\ell-1}
\end{equation}
    However, before we start working with nearly Gaussian distributions, let's take a brief digression to introduce a useful identity for evaluating expectations. Let $\eps = 1 / n$.
\begin{equation}\label{eq:avgf}
\begin{aligned}
    \E[f(\zial{i_1}{\alpha_1}{\ell}, \ldots, \zial{i_m}{\alpha_m}{\ell})]
    &= \int \frac{\exp(- \frac{1}{2} \sum_{i, \alpha} \IGIL{\alpha_1 \alpha_2}{\ell} \zia{i_1}{\alpha_1} \zia{i_2}{\alpha_2} + O(\eps))}
    {\le(\int\exp(- \frac{1}{2} \sum_{i, \alpha} \IGIL{\alpha_1 \alpha_2}{\ell} \zia{i_1}{\alpha_1} \zia{i_2}{\alpha_2} + O(\eps)) \prod_{i, \alpha} \tia{dz}{i}{\alpha} \ri)}
    f(z) \prod_{i, \alpha} \tia{dz}{i}{\alpha} \\
    &= \int \frac{1}{\sqrt{|2 \pi \igl{\ell}|^\n}} \exp\le(- \frac{1}{2} \sum_{i, \alpha} \IGIL{\alpha_1 \alpha_2}{\ell} \zia{i_1}{\alpha_1} \zia{i_2}{\alpha_2}\ri) f(z) (1 + O(\eps)) \\
\end{aligned}
\end{equation}
    In the second equality, we Taylor expanded the expression around $\eps = 0$. We have also dropped the superscript $\tl{}{\ell}$ to reduce clutter. When the indicies $i_1, \ldots i_m$ are equal, we can marginalize over the unused indices and use the notation from \eqref{eq:braknotation} for Gaussian expectations.
\begin{equation}
\begin{aligned}
    \E[f(\zial{i}{\alpha_1}{\ell}, \ldots, \zial{i}{\alpha_m}{\ell})] = \brak{f(\ti{z}{\alpha_1}, \ldots, \ti{z}{\alpha_m})}_{\igl{\ell}} + O(\eps)
\end{aligned}
\end{equation}
    Returning back to \eqref{eq:pzl}, the $P(\zl{\ell-1} \mid \calD)$ term will be determined by our inductive hypotheses, so let's consider the other term, $P(\zl{\ell} \mid \zl{\ell-1})$. Similar to what happened in the first layer with fixed layer inputs $\zl{\ell-1}$, this distribution is exactly Gaussian.
\begin{equation}
\begin{gathered}
    P(\zl{\ell} \mid \zl{\ell-1}) = 
    \frac{1}{\sqrt{|2 \pi \hgl{\ell}|^n}} \exp\le(- \frac{1}{2} \sum_{i \alpha} \HGIL{\alpha_1 \alpha_2}{\ell} \zial{i_1}{\alpha_1}{\ell} \zial{i_2}{\alpha_2}{\ell}\ri) \\
    \hgil{\alpha_1 \alpha_2}{\ell} = \cb + \frac{\cw}{\nl{\ell-1}} \sum_{i=1}^{\nl{\ell-1}} \sial{i}{\alpha_1}{\ell-1} \sial{i}{\alpha_2}{\ell-1}
\end{gathered}
\end{equation}
    Here, we have introduced $\hgil{\alpha_1 \alpha_2}{\ell}$, a random variable that depends on $\zl{\ell-1}$. Let's split $\hgil{\alpha_1 \alpha_2}{\ell}$ into its expectation and deviations. 
\begin{equation}
\begin{gathered}
    \gil{\alpha_1 \alpha_2}{\ell} = \E[\hgil{\alpha_1 \alpha_2}{\ell}] = C_b + \frac{C_W}{\nl{\ell-1}} \sum_{i=1}^{\nl{\ell-1}} \E[\sial{i}{\alpha_1}{\ell-1} \sial{i}{\alpha_2}{\ell-1}]  \\
    \dgil{\alpha_1 \alpha_2}{\ell} = \hgil{\alpha_1 \alpha_2}{\ell} - \gil{\alpha_1 \alpha_2}{\ell}
\end{gathered}
\end{equation}
    One reason this split will be useful is because the expectation is close to the desired covariance from \eqref{eq:gildef}. 
\begin{equation}\label{eq:giligil}
    \gil{\alpha_1 \alpha_2}{\ell} = \igil{\alpha_1 \alpha_2}{\ell} + O\le(\eps\ri)
\end{equation}
    This can be easily proved by induction. Notice that the only difference between the two terms is the expectation $\E[\sial{i}{\alpha_1}{\ell-1} \sial{i}{\alpha_2}{\ell-1}]$ instead of the Gaussian expectation $\brak{\sigma(\zi{\alpha_1})\sigma(\zi{\alpha_2})}_{\igl{\ell-1}}$. At layer 2 when the preactivations are exactly Gaussian, the terms $\gil{\alpha_1 \alpha_2}{2}$ and $\igil{\alpha_1 \alpha_2}{2}$ are trivially equal, and in later layers we only need to apply \eqref{eq:avgf} to bound the difference between the nearly-Gaussian and Gaussian expectations. 
    A second reason this split will be useful is because the variance of the deviations is quite small.
\begin{equation}
\begin{aligned}
    \E[\dgil{\alpha_1 \alpha_2}{\ell} \dgil{\alpha_3 \alpha_4}{\ell}] 
    &= \le(\frac{\cw}{\nl{\ell-1}}\ri)^2 \sum_{j,k=1}^{\nl{\ell-1}} \E[(\sia{j}{\alpha_1} \sia{j}{\alpha_2} - \E[\sia{j}{\alpha_1} \sia{j}{\alpha_2}])(\sia{k}{\alpha_3} \sia{k}{\alpha_4} - \E[\sia{k}{\alpha_3} \sia{k}{\alpha_4}])] \\
\end{aligned}
\end{equation}
    In the part of this equation inside the sum, we can use \eqref{eq:avgf} and find that the $\nl{\ell-1}^2$ off-diagonal terms are $O(\eps)$ and the $\nl{\ell-1}$ diagonal terms are the following:
\begin{equation}
    \brak{\sigma(\zi{\alpha_1}) \sigma(\zi{\alpha_2}) \sigma(\zi{\alpha_3}) \sigma(\zi{\alpha_4})}_{\igl{\ell-1}} - \brak{\sigma(\zi{\alpha_1}) \sigma(\zi{\alpha_2})}_{\igl{\ell-1}} \brak{\sigma(\zi{\alpha_3}) \sigma(\zi{\alpha_4})}_{\igl{\ell-1}} + O(\eps)
\end{equation}
    These cancel out nicely with the leading coefficient $(1/\nl{\ell-1})^2$, so the overall variance is $O(\eps)$.
\begin{equation}
    \E[\dgil{\alpha_1 \alpha_2}{\ell} \dgil{\alpha_3 \alpha_4}{\ell}] = O(\eps)
\end{equation}
    By a similar argument, all the higher moments of $\dg$ are also $O(\eps)$.
    Although the covariances $\hgil{\alpha_1 \alpha_2}{\ell}$ are straightforward to compute, inverting the matrix is slightly harder. We will need the following common identity for inverses of perturbed matrices.
\begin{equation}\label{eq:perturbationinverse}
\begin{aligned}
    \HGIL{\alpha_1 \alpha_2}{\ell} &= \ti{((\gl{\ell} + \dgl{\ell})^{-1})}{\alpha_1 \alpha_2}  \\
    &= \GIL{\alpha_1 \alpha_2}{\ell} - \sum_{\beta} \GIL{\alpha_1 \beta_1}{\ell} \dgil{\beta_1 \beta_2}{\ell} \GIL{\beta_2 \alpha_2}{\ell} + O(\Delta^2)
\end{aligned}
\end{equation}
    At this point we have the tools necessary to expand and simplify \eqref{eq:pzl} to prove our inductive hypothesis. Before doing so however, let's recall the following properties of $\dg$ that will be essential for the simplification.
\begin{equation}\label{eq:dgproperties}
\begin{aligned}
    \E[\dgil{\alpha_1 \alpha_2}{\ell}] = 0 \\
    \E[O(\Delta^2)] = O(\eps)
\end{aligned}
\end{equation}
    The expansion of \eqref{eq:pzl} will also create really long equations, so let's define the following shorthand.
\begin{equation}\label{eq:skshorthand}
\begin{aligned}
    S(z) &= - \frac{1}{2} \sum_{i, \alpha} \GIL{\alpha_1 \alpha_2}{\ell} \zia{i_1}{\alpha_1} \zia{i_2}{\alpha_2} \\
    \Delta H_\alpha &= \frac{1}{2}\sum_{\beta} \GIL{\alpha_1 \beta_1}{\ell} \dgil{\beta_1 \beta_2}{\ell} \GIL{\beta_2 \alpha_2}{\ell}
\end{aligned}
\end{equation}
    Now we are ready to put it all together.
\begin{equation}
\begin{aligned}
    P(\zl{\ell} \mid \calD) &= \int P(\zl{\ell-1} \mid \calD) P(\zl{\ell} \mid \zl{\ell-1}) \prod_{i, \alpha} \tial{dz}{i}{\alpha}{\ell-1} \\
    &= \E[P(\zl{\ell} \mid \zl{\ell-1})] \\
    &= \E\le[\frac{
        \exp\le(-\frac{1}{2} \sum_{\alpha_1, \alpha_2 \in \calD} \HGIL{\alpha_1 \alpha_2}{\ell} \sum_{i=1}^{\nl{\ell}} \zial{i}{\alpha_1}{\ell} \zial{i}{\alpha_2}{\ell} \ri)
    }{   
        \le(\int \exp\le(-\frac{1}{2} \sum_{\alpha_1, \alpha_2 \in \calD} \HGIL{\alpha_1 \alpha_2}{\ell} \sum_{i=1}^{\nl{\ell}} \zial{i}{\alpha_1}{\ell} \zial{i}{\alpha_2}{\ell}\ri) \prod_{i, \alpha} \tial{dz}{i}{\alpha}{\ell} \ri)
    }\ri] \\
    &= \E\le[\frac{
        \exp\le(S(z) + \sum_{i, \alpha}(\Delta H_\alpha + O(\Delta^2)) \zia{i}{\alpha_1} \zia{i}{\alpha_2}\ri)
    }{
        \le(\int \exp\le(S(z) + \sum_{i, \alpha}(\Delta H_\alpha + O(\Delta^2)) \zia{i}{\alpha_1} \zia{i}{\alpha_2}\ri) \prod_{i, \alpha} \tia{dz}{i}{\alpha}\ri)
    }\ri] \\
    & =\E\Bigg[\begin{aligned}[t]
        & \frac{\exp(S(z))}{\int \exp(S(z)) \prod_{i, \alpha} \tia{dz}{i}{\alpha}} \\
        &\times\Bigg(1 \begin{aligned}[t] &+ \sum_{i,\alpha} \Delta H_\alpha \zia{i}{\alpha_1} \zia{i}{\alpha_2} + O(\Delta^2) \\
        &- \frac{\int (\sum_{\alpha} \Delta H_\alpha + O(\Delta^2)) \zia{i}{\alpha_1} \zia{i}{\alpha_2} \exp(S(z)) \prod_{i, \alpha}\tia{dz}{i}{\alpha}}{\int \exp(S(z)) \prod_{i, \alpha} \tia{dz}{i}{\alpha}}
    \Bigg)\Bigg]\end{aligned}\end{aligned} \\
    &= \frac{\exp(S(z))}{\int \exp(S(z)) \prod_{i, \alpha} \tial{dz}{i}{\alpha}{\ell}} (1 + O(\eps)) \\
    &\propto \exp(S(z) + O(\eps)) \\
    &\propto \exp\le(-\frac{1}{2} \sum_{i, \alpha} \IGIL{\alpha_1 \alpha_2}{\ell} \zial{i}{\alpha_1}{\ell} \zial{i}{\alpha_2}{\ell} + O(\eps) \ri)
\end{aligned}
\end{equation}
In the fourth equality we inserted \eqref{eq:perturbationinverse} with notation from \eqref{eq:skshorthand}, in the fifth we Taylor expanded around $\eps=0$, in the sixth we applied \eqref{eq:dgproperties}, the useful properties of $\dg$, in the seventh we took the logorithm and Taylor expanded to move the $O(\eps)$ term into the exponent, and in the last equality we applied \eqref{eq:giligil}, the difference bound between $\hgil{\alpha_1 \alpha_2}{\ell}$ and $\igil{\alpha_1 \alpha_2}{\ell}$, combined with \eqref{eq:perturbationinverse} to bound the difference of their inverses. Now that we have proven that the probability distribution is of the desired form, it's simple to check that the covariances are as desired as well. We can even use \eqref{eq:avgf} again.
\begin{equation}
\begin{aligned}
    \E[\zial{i_1}{\alpha_1}{\ell} \zial{i_2}{\alpha_2}{\ell}] &= \delta_{i_1 i_2} \brak{\zil{\alpha_1}{\ell} \zil{\alpha_2}{\ell}}_{\igl{\ell}} + O(\eps) \\
    &= \delta_{i_1 i_2} \igil{\alpha_1 \alpha_2}{\ell} + O(\eps)
\end{aligned}
\end{equation}
Thus, we have completed the proof that the probability distribution $P(\zl{\ell} \mid \calD)$ and its covariances are of the desired form.
\end{proof}

\section{Derivation of Equation \ref{eq:mehler}} \label{sec:mehlerproof}

Let $\gamma_\Sigma (u_1, u_2)$ and $\gamma_{I} (u_1, u_2)$ represent measures on the 2D unit-variance Gaussian when the covariance is $k$ and $0$.
\begin{equation}
    \begin{aligned}
\gamma_\Sigma (u_1, u_2) &= \frac{1}{2\pi\sqrt{1-k^2}} \exp\left(\frac{2 u_1 u_2 k - u_1^2 - u_2^2}{2(1-k^2)}\right) \\
\gamma_{I} (u_1, u_2) &= \frac{1}{2\pi} \exp\left(\frac{-u_1^2 - u_2^2}{2}\right)
    \end{aligned}
\end{equation}
The following is a standard formulation of Mehler's equation in terms of Gaussian measures \citep{kibble1945}.
\begin{equation}
\gamma_{I} (u_1, u_2) \sum_{n=0}^{\infty} \frac{k^n}{n!} \He_n (u_1) \He_n (u_2) = \gamma_\Sigma (u_1, u_2)
\end{equation}
Using Mehler's equation, we can prove \eqref{eq:mehler}, the product of two Hermite polynomials under correlated Gaussian weighting.
\begin{equation}
\begin{aligned}
    & \brak{\He_n (z_1) \He_m (z_2)}_\Sigma, \quad \Sigma = \begin{pmatrix} 1 & k \\ k & 1 \end{pmatrix} \\
    =& \iint \He_n (z_1) \He_m (z_2) \gamma_\Sigma (z_1, z_2) \, dz_1 \, dz_2 \\
    =& \iint \He_n (z_1) \He_m (z_2) \left(\sum_{i=0}^{\infty} \frac{k^i}{i!} \He_i (z_1) \He_i (z_2)\right) \gamma_{I} (z_1, z_2) \, dz_1 \, dz_2 \\
    =& \brak*{\He_n (z_1) \He_m (z_2) \left(\sum_{i=0}^{\infty} \frac{k^i}{i!} \He_i (z_1) \He_i (z_2)\right)} \\
    =& \brak*{\He_m (z_2) \left(\sum_{i=0}^{\infty} \brak*{\frac{k^i}{i!} \He_n(z_1) \He_i (z_1)} \He_i (z_2) \right)} \\
    =& \brak*{\He_m (z_2) \brak*{\frac{k^n}{n!}\He_n (z_1)^2} \He_n (z_2)} \\
    =& \brak{\He_m (z_2) \He_n (z_2) k^n} \\
    =& \,\delta_{nm} \, n! \, k^n
\end{aligned}
\end{equation}

\end{document}